\renewcommand{\u}{{\bf u}}
\renewcommand{\v}{{\bf v}}
\newcommand{\x}{{\bf x}}
\newcommand{\y}{{\bf y}}
\def\Am{{\bf A}}
\def\Im{{\bf I}}
\newcommand{\E}{{\mathbf E}}
\newtheorem{Def}{Definition}
\newtheorem{Ass}{Assumption}
\newtheorem{Thm}{Theorem}
\newtheorem{Lem}[Thm]{Lemma}
\def\mP{{\mathcal P}}
\def\reals{{\mathcal R}}
\newcommand{\Xcal}{\mathcal{X}}
\newcommand{\Zcal}{\mathcal{Z}}
\newcommand{\ignore}[1]{}
\def\reals{{\mathbb R}}
\def\mP{{\mathcal P}}
\def\bold0{\mathbf{0}}
\def\x{\mathbf{x}}
\def\y{\mathbf{y}}
\def\v{\mathbf{v}}
\newcommand{\DG}{{\rm DG}}
\begin{document}

\twocolumn[
\icmltitle{Generative Minimization Networks: Training GANs Without Competition}

% It is OKAY to include author information, even for blind
% submissions: the style file will automatically remove it for you
% unless you've provided the [accepted] option to the icml2021
% package.

% List of affiliations: The first argument should be a (short)
% identifier you will use later to specify author affiliations
% Academic affiliations should list Department, University, City, Region, Country
% Industry affiliations should list Company, City, Region, Country

% You can specify symbols, otherwise they are numbered in order.
% Ideally, you should not use this facility. Affiliations will be numbered
% in order of appearance and this is the preferred way.
\icmlsetsymbol{equal}{*}

\begin{icmlauthorlist}
\icmlauthor{Paulina Grnarova}{to}
\icmlauthor{Yannic Kilcher}{to}
\icmlauthor{Kfir Y. Levy}{goo}
\icmlauthor{Aurelien Lucchi}{to}
\icmlauthor{Thomas Hofmann}{to}

\end{icmlauthorlist}

\icmlaffiliation{to}{ETH Zurich}
\icmlaffiliation{goo}{Technion-Israel Institute of Technology}

\icmlcorrespondingauthor{Paulina Grnarova}{paulina.grnarova@inf.ethz.ch}

% You may provide any keywords that you
% find helpful for describing your paper; these are used to populate
% the "keywords" metadata in the PDF but will not be shown in the document
\icmlkeywords{}

\vskip 0.3in
]

% this must go after the closing bracket ] following \twocolumn[ ...

% This command actually creates the footnote in the first column
% listing the affiliations and the copyright notice.
% The command takes one argument, which is text to display at the start of the footnote.
% The \icmlEqualContribution command is standard text for equal contribution.
% Remove it (just {}) if you do not need this facility.

\printAffiliationsAndNotice{}  % leave blank if no need to mention equal contribution
%\printAffiliationsAndNotice{\icmlEqualContribution} % otherwise use the standard text.

\begin{abstract}
Many applications in machine learning can be framed as minimization problems and solved efficiently using gradient-based techniques. However, recent applications of generative models, particularly GANs, have triggered interest for solving min-max games for which standard optimization techniques are often not suitable. Among known problems experienced by practitioners are the lack of convergence guarantees or convergence to a non-optimum cycle. At the heart of these problems is the min-max structure of the GAN objective which creates non-trivial dependencies between the players.
We propose to address this problem by optimizing a different objective that circumvents the min-max structure using the notion of duality gap from game theory. We provide novel convergence guarantees on this objective and demonstrate why the obtained limit point solves the problem better than known techniques.
\end{abstract}

\FloatBarrier

\section{Introduction}

\begin{figure}[t]
\begin{subfigure}{.25\textwidth}
  \centering
  \includegraphics[scale=0.2]{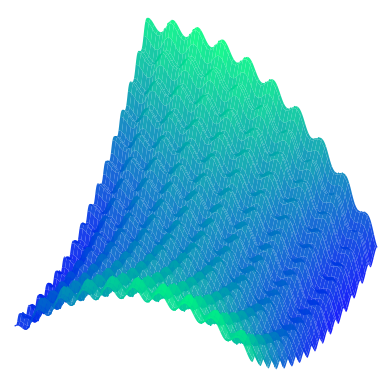}
  \caption{Minimax Objective}
\end{subfigure}%
\begin{subfigure}{.25\textwidth}
  \centering
\includegraphics[scale=0.2]{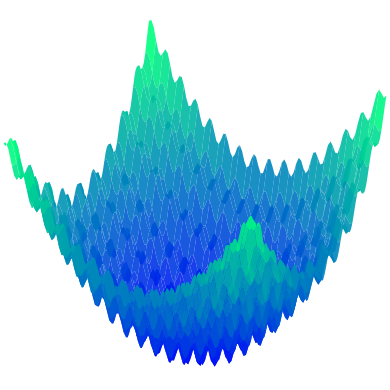} 
  \caption{Duality Gap Objective}
\end{subfigure}
\caption{\label{fig:motivation} Optimization landscapes: \textbf{(a:)} Minimax objective function with many local Nash equilibria. The goal is to find a (local) NE. \textbf{(b:)} Duality Gap (DG) objective function. The (local) NE are turned into (local) minima and the minimax problem is transformed into an optimization of the DG wrt. both players.}
\end{figure}
\vspace{-0.1cm}
Many of the core applications encountered in the field of machine learning are framed as minimizing a differentiable objective. Often, the method of choice to optimize such a function is a gradient-descent (or a related) method which consists in simply stepping in the negative gradient direction. This algorithm has become a defacto due to its simplicity and the existence of convergence guarantees, even for  functions that are not necessarily convex.

On the other hand, the rise of generative models, particularly the GAN framework from~\citet{goodfellow2014generative}, has triggered significant interest in the machine learning community for optimizing minimax objectives of the form
\begin{equation}
\min_{\u \in U} \max_{\v \in V} M(\u,\v)
\label{eq:minmax}
\end{equation}
The latter objective is a considerably more challenging problem to solve in a general setting as it requires optimizing multiple objectives jointly. The typical notion of optimality used for such games is the concept of Nash equilibrium (NE) where no player can improve its objective by unilaterally changing their strategy. As discussed in~\cite{jin2019local}, finding a global NE is NP-hard in a general setting where the minimax objective is not convex-concave. Instead, one has to settle for a local NE or a different type of local optimality.

In practice, minimax problems are still solved using gradient-based algorithms, especially gradient descent-ascent (GDA) that simply alternates between a gradient descent step for $\x$ and a gradient ascent step for $\y$. There are (at least) three problems with GDA in GANs and games in general: (i) the potential existence of cycles implies there are no convergence guarantees~\citep{mescheder2018training}, (ii) even when gradient descent converges, the rate may be too slow in practice because the recurrent dynamics require extremely small learning rates ~\citep{mescheder2018training, gidel2018negative} and (iii) since there is no single objective, there is no way to measure progress ~\citep{balduzzi2018mechanics}. Recently, ~\cite{grnarova2019domain} addressed (iii) by proposing the duality gap (DG) as a metric for evaluating GANs that naturally arises from the game-theoretic aspect. The authors show the metric correlates highly with the performance and quality of GANs. Concretely, the evolution of the DG is shown to track convergence of the algorithm to an optimum. In this work, we go further and propose to use DG as a training objective in order to address (i) and (ii). In fact, we argue that the duality gap is \textbf{the objective} that should be minimized for training a GAN.

% Intuitively, there are several reasons for this. First, DG as a training objective turns the difficult minimax problem into a simpler minimization problem. This is illustrated in Fig.~\ref{fig:motivation} that shows the landscape of a minimax objective for a nonconvex-nonconcave game and the landscape of the corresponding Duality Gap function. Concretely, a pure NE (if it exists) turns into a global minimum and the new goal becomes to converge to a local or global minimum as in standard optimization, where both players are jointly minimizing the same objective. This is not just more intuitive, but also allows us to reuse tools from the literature (\pg{examples?}). Additionally, as we show later in the experimental section, when optimizing the DG, some of the points that are undesired solutions, yet are points where GDA and alternatives converge to, turn into maxima or saddle points, thus become easier to avoid. \pg{It would be great to really make a point of this if we can: like undesired points have a high DG thus are easier to avoid, while desired points become more stable}. Lastly, moving to the standard setting of minimization allows us to have access to train and validation curves which enables practitioners to track progress and avoid overfitting.
% \pg{we used to say (local) NE turn into (local) min. do we still say that or avoid it as it might not be true?}

Intuitively, there are several reasons for this. First, instead of looking for a Nash equilibrium of the minimax problem, one can interpret the optimization of the DG as a minimization problem whose solution obeys the typical criticality condition used in optimization.

This is illustrated in Fig.~\ref{fig:motivation} that shows the landscape of a minimax objective for a nonconvex-nonconcave game and the landscape of the corresponding Duality Gap function.
Concretely, a pure NE (if it exists) turns into a global minimum and the new goal becomes to converge to a local or global minimum as in standard optimization, where both players are jointly minimizing the same objective.
This has multiple advantages such as: i) one can simply rely on minimization optimization methods for non-convex functions, for which stronger convergence guarantees exist~\citep{jain2017non, dauphin2014identifying,nesterov2006cubic}, and ii) it suppresses the competition aspect between players, which is responsible for cyclic behaviors and the lack of stability near stationary points. We support the latter claim in the experimental section, where we demonstrate that, GDA might converge to points that are undesired solutions, which are turned into maxima or saddle points when optimizing the duality gap, and thus become easy to avoid. Lastly, moving to the standard setting of optimization allows us to have access to train and validation curves which enables practitioners to track progress and avoid overfitting.

In order to verify the validity of our approach, we derive a convergence rate under a similar set of assumptions as the ones typically found in the GAN literature for min-max objectives~\citep{goodfellow2014generative, nowozin2016f}~\footnote{Note that these prior works assume convexity in the function space. Since one can not typically optimize in the function space, we naturally place our assumptions on the parameter space.}. Concretely, we prove the limit point of our algorithm is guaranteed to converge to a point of zero divergence and we also derive a rate of convergence. We then check the validity of our theoretical results on a wide range of (assumption-free) practical problems. 
Finally, we empirically demonstrate that changing the nature of the problem from an adversarial game theoretic to an optimization setting yields a training algorithm that is more robust to the choice of hyperparameters.
For instance, it avoids having to choose different learning rates for each player, which is known to be an effective technique for  GANs~\citep{heusel2017gans}.

In summary, we make the following contributions:
\begin{itemize}[noitemsep,topsep=0pt]
    \item We propose a new objective function for training GANs without relying on a complex minimax structure and we derive a rate of convergence.
    \item We prove that adaptive optimization methods are suitable for training our objective and that they exploit certain properties of the objective that allow for faster rates of convergence.
    \item We propose a simple practical algorithm that turns the game theoretic formulation of GAN (as well as WGAN, BEGAN etc.) into an optimization problem.
    \item We validate our approach empirically on several toy and real datasets and show it exhibits desirable convergence and stability properties, while attaining improved sample fidelity. 
\end{itemize}

% %%%%%%%%%%%%%%%%%%%

\vspace{-2.1mm}
\section{Duality Gap as a Training Objective}
\vspace{-0.2cm}
We now introduce some key game-theoretic concepts that will be necessary to present our algorithm.

A zero-sum game consists of two players $P_1$ and $P_2$ who choose a decision from their respective decision sets $U$ and $ V$. A game objective $M: U \times  V \mapsto \reals$ defines the utilities of the players. 
Concretely, upon choosing a pure strategy $(\u,\v)\in  U \times  V$ the utility of $P_1$ is $-M(\u,\v)$, while the utility of $P_2$ is  $M(\u,\v)$. The goal of either $P_1$/$P_2$ is to maximize their worst case utilities; thus,
\vspace{-0.2cm}
\begin{align}
\label{eq:MinmaxGame}
\min_{\u\in U}\max_{\v\in  V}M(\u,\v) \quad \textbf{(Goal of $P_1$)},  \\
\nonumber 
\max_{\v\in V}\min_{\u\in  U}M(\u,\v)\quad \textbf{(Goal of $P_2$)}
\end{align} 
The above formulation raises the question of whether there exists a solution  $(\u^*,\v^*)$ to which both players may jointly converge.
The latter only occurs if there exists  $(\u^*,\v^*)$ such that  neither $P_1$ nor $P_2$ may increase their utility by unilateral deviation.  Such a solution is a \emph{pure equilibrium}, and is formally defined as follows,
$$
\max_{\v\in V}M(\u^*,\v) = \min_{\u\in U}M(\u,\v^*)~ ~~\textbf{(Pure Equilibrium).}
$$
% While a pure equilibrium does not always exist, 
% the seminal work of~\cite{nash1950equilibrium} shows that an extended notion of equilibrium always does. Specifically, there always exists a distribution $\D_1$ over elements of $ U$, and a distribution $\D_2$ over elements of $ V$, such that the following holds,
% $$
% \max_{\v\in V}\E_{\u\sim \D_1}M(\u,\v) = \min_{\u\in U}\E_{\v\sim \D_2}M(\u,\v)~ \\ ~~\textbf{(MNE).}
% $$
% Such a solution is called a \emph{Mixed Nash Equilibrium (MNE)}.
This notion of equilibrium gives rise to the natural performance measure of a given pure strategy:
\begin{Def}[Duality Gap]
% Let $\D_1$ and $\D_2$ be fixed distributions over elements from $ U$ and $ V$ respectively. Then the  duality gap $\DG$ of $(\D_1,\D_2)$ is defined as follows,
% \begin{align}
% \rm{DG}:=
%  \max_{\v\in V}\E_{\u\sim \D_1}M(\u,\v)
%  -\min_{\u\in U}\E_{\v\sim \D_2}M(\u,\v).
% \end{align}
For a given \emph{pure strategy} $(\u,\v)\in U\times  V$ we define,
\begin{align}
\rm{DG}(\u,\v):=
 \max_{\v'\in V}M(\u,\v')~
 -\min_{\u'\in U}M(\u',\v)~.
\end{align}
\end{Def}

\vspace{-2mm}
\subsection{Properties of the Duality Gap}
\vspace{-1mm}

In the context of GANs,~\cite{grnarova2019domain} showed as long as $G$ is not equal to the true distribution then the duality gap is always positive. In particular, the duality gap is at least as large as the Jensen-Shannon divergence between true and fake distributions (which is always non-negative)\footnote{Similarly DG is larger than other divergences for other minimax GAN formulations, such as WGAN~\citep{grnarova2019domain}}. Furthermore, if $G$ outputs the true distribution, then there exists a discriminator such that the duality gap is zero.

Building on the property of the duality gap as an upper bound on the divergence between the data and the model distributions, it appears intuitive that one could train a generative model by optimizing the duality gap to act as a surrogate objective function.
Doing so has the advantage of reducing the training objective to a standard optimization problem, therefore bypassing the game theoretic formulation of the original GAN problem.
This however raises several questions about the practicality of such an approach, its rate of convergence and its empirical performance compared to existing approaches. We set out to answer these questions next.
Formally, the problem we consider is: 
\begin{align}
(\u^*, \v^*) = \min_{\u, \v} \left[ \DG(\u,\v) \right] %:= \max_{\v' \in V} M(\u, \v') - \min_{\u' \in U} M(\u', \v) \right]
\label{eq:DG}
\end{align}
There are several types of convergence guarantees that are desirable and commonly found in the GAN literature, including (i) stability around stationary points as in~\cite{mescheder2018training}, and (ii) global convergence guarantees as in~\cite{goodfellow2014generative, nowozin2016f}.
We start by deriving the second types of guarantees in Section~\ref{sec:theory}. We then present a practical algorithm whose stability is discussed in Section~\ref{sec:algorithm}.
\vspace{-1mm}

\subsection{Theoretical guarantees}
\label{sec:theory}
\vspace{-1mm}

In this section we analyze the performance of first-order methods for optimizing the DG objective (Eq.~\ref{eq:DG}). We prove that standard adaptive methods (i.e., AdaGrad) yield faster convergence compared to standard primal-dual methods such as gradient descent-ascent and extragradient, which are commonly used for GANs.

We develop our analysis under an assumption known as \emph{realizability}, which can be enforced during training using techniques introduced in prior work, e.g.~\cite{dumoulin2016adversarially}.
We prove that, in a stochastic setting, AdaGrad~\citep{duchi2011adaptive} converges to the optimum of Eq.~\eqref{eq:DG} at a rate of $O(1/T)$, where $T$ is the number of gradient updates (proportional to the number of stochastic samples). This directly translates to an $O(1/T)$-approximate pure equilibrium for the original minimax problem, which substantially improves over the rate of $O(1/\sqrt{T})$ for the stochastic minimax setting~\citep{juditsky2011first}.
Even under this realizability assumption, we are not aware of a similar result for minimax primal-dual algorithms such as stochastic gradient descent-ascent and extragradient. 

\vspace{-2mm}
\paragraph{Minimax problems \& Realizability.}
Formally, we consider stochastic minimax problems:
\vspace{-2mm}
\begin{align} \label{eq:MinimaxStochastic}
\min_{\u\in U}\max_{\v\in V} \left[M(\u,\v) : = \E_{z\sim\mP}M(\u,\v;z) \right]
\end{align}
We make the following assumption,
\begin{Ass}[Realizability] \label{assum:realizability}
There exits a pure equilibrium $(\u^*,\v^*)$ of $M$ such that $(\u^*,\v^*)$ is also the pure equilibrium of $M(\cdot,\cdot;z)$ for any $z\in \text{supp}(\mP)$, where  \text{supp}($\mP$) is the support of $\mP$.
\end{Ass}
Let $\DG(\cdot,\cdot;z)$ be the duality gap defined as in Eq.~\eqref{eq:DG} for $M(\cdot,\cdot;z)$. Then realizability implies that the original minimax problem is equivalent to solving the following stochastic minimization problem,
\begin{align} \label{eq:DG_Stochastic}
\min_{\u\in U,\v\in V} \left[ \DG(\u,\v): =  \E_{z\sim \mP}\DG(\u,\v;z) \right]~.
\end{align}
\textbf{Remark:} In Assumption~\ref{assum:realizability} we assume \emph{perfect}  realizability which might be too restrictive. 
In the appendix we extend our discussion to the case
where  realizability  only holds \emph{approximately}; in this case we show an approximate equivalence between the formulations of Equations~\eqref{eq:MinimaxStochastic} and~\eqref{eq:DG_Stochastic}.

%In the appendix extend the above relation to the case where the realizability assumption only holds approximately. In the latter case, we show an approximate equivalence between the formulations of Equations ~\eqref{eq:MinimaxStochastic} and ~\eqref{eq:DG_Stochastic}.
\vspace{-1mm}
\textbf{Realizability in the context of GANs.}
Training GANs  is equivalent to solving a stochastic minimax problem with the following objective,
\begin{align}
M(\u,\v) &= \E_{(z_1,z_2)\sim \mP} [ M(\u,\v;z_1,z_2): =   \log D_{\v}(z_1) \nonumber \\
& \qquad\qquad\qquad + \log(1-D_{\v}(G_{\u}(z_2)))~],
\end{align}
\vspace{-1mm}
where $\u$ and $\v$ are the respective weights of the generator and discriminator, and the source of randomization is a vector $z = (z_1,z_2)$ where $z_1$ corresponds to random samples associated to the true data, and $z_2$ is the noise that is injected in the generator. 
Commonly, $z_1$ is distributed uniformly over the true samples, and $z_2$ is a standard random normal vector. The joint distribution $\mP$ over $(z_1,z_2)$ can be any  distribution such that marginal distribution of $z_1$ (respectively $z_2$) is uniform (respectively Normal)  \footnote{ This is since $M(\u,\v;z_1,z_2)$ is separated into two additive terms, each depends on either $z_1$ or $z_2$.}. 
While the most natural choice for $\mP$ is to sample $z_1,z_2$ independently, we shall allow them to depend on each other.

Thus, in the context of GANs,  realizability  implies that there exists a pair of generator-discriminator $(\u^*,\v^*)$ such that for any sample $(z_1,z_2)\in \text{supp}(\mP)$ then $(\u^*,\v^*)$ is also a pure equilibrium of $M(\cdot,\cdot,z_1,z_2)$.
A natural case where realizability applies is when the following assumption holds,
\begin{Ass}
[``Perfect" generator]
There exist $\u^*\in U$ such $G_{\u^*}(z_2) = z_1$ for any $(z_1,z_2)\sim \mP$.
\end{Ass}

\vspace{-1mm}
This means that  there exists a ``perfect" generator $\u^*$ that for any pair of true sample $z_1$ and matching noise injection $z_2$, then $G_{\\u^*}$ perfectly reproduces $z_1$ from $z_2$. In this case the optimal discriminator $\v^*$ outputs $D_{\v^*}(z_1) = D_{\v^*}(G_{\u^*}(z_2))=1/2~, \forall (z_1,z_2)\sim\mP$. As seen in Fig.~\ref{fig:mog}, the above assumption holds for well trained GANs in practice and $D_{\v^*}(\cdot)$ is quite concentrated around $1/2$ for both real and generated samples.

The question of sampling from the joint distribution of \emph{matching} (true data, noise injection) pairs $(z_1,z_2)$, was recently addressed in~\cite{dumoulin2016adversarially}, where the authors devise a novel GAN architecture that produces these pairs throughout the training. In our experiments we use their architecture to sample matching  $(z_1,z_2)$ dependently. Additionally, \cite{dumoulin2016adversarially} also show that our ``perfect generator`` assumption holds approximately well in practice (see figures 2-4 therein).

\vspace{-7pt}
\paragraph{Fast convergence for DG optimization under realizability.} We have seen that realizability  implies we can turn  a stochastic minimax problem as in Eq.~\eqref{eq:MinimaxStochastic}, into a stochastic minimization problem as in Eq.~\eqref{eq:DG_Stochastic}. Notably, this enables the use of standard optimization algorithms to train a generative model.
It is well known that under realizability, SGD achieves a convergence rate of $O(1/T)$ for stochastic convex optimization problems \citep{moulines2011non,needell2014stochastic}, which improves over the standard $O(1/\sqrt{T})$ rate. 
However, in the realizable case, SGD requires prior knowledge about the smoothness of the problem which is usually unknown in advance.
Moreover, in practice one often uses adaptive methods such as AdaGrad \citep{duchi2011adaptive} and Adam \citep{kingma2015adam} for training generative models. Next we demonstrate a simple version of Adagrad can in fact achieve the fast rate of 
$O(1/T)$, without knowing the smoothness constant of the objective, $L$.

Our goal is to solve stochastic optimization problems as in Eq.~\eqref{eq:MinimaxStochastic}, and we assume that each $M(\cdot,\cdot;z)$ is convex-concave. Under realizability, this translates 
to solving a stochastic \emph{convex} optimization problem as in Eq.~\eqref{eq:DG_Stochastic}.
To solve this we consider the following  update,
\begin{align}
(\u_{t+1},\v_{t+1}) = (\u_{t},\v_{t}) - \eta_t g_t& \; \eta_t = D/\left( \sum\nolimits_{\tau=1}^t  \|g_\tau\|^2 \right)^\frac{1}{2}~ \nonumber \\ \textbf{(simplified AdaGrad)} 
\label{eq:adagrad}
\end{align}
where $g_t$ is an unbiased estimate of $(\nabla_\u \DG(\u_t,\v_t),\nabla_\v \DG(\u_t,\v_t))$.  Note that $\eta_t$ does not depend on the smoothness constant $L$.
Next we show that this AdaGrad variant ensures an  $O(1/T)$ rate under realizability. Let $\nabla_{(\u,\v)} \DG(\cdot,\cdot;z)$ be the concatenation of the gradients of $\DG$ w.r.t.~$\u$ and $\v$.
\begin{Lem}\label{lem:AdaGrad}
Consider a stochastic optimization problem in the form of Eq.~\eqref{eq:DG_Stochastic}. Further assume that $\DG(\cdot;z)$ is $L$-smooth~\footnote{$L$-smoothness means that the gradients of $\DG(\cdot;z)$ are $L$-Lipschitz, i.e. $\|\nabla_{(\u,\v)} \DG(\u_1,\v_1;z) -\nabla_{(\u,\v)} \DG(\u_2,\v_2;z)\| \leq L (\|\u_1-\u_2\|+\|\v_1-\v_2\|)~,\forall (\u_1,\v_1),(\u_2,\v_2)\in U\times V, z\in \text{supp}(\mP)$.} and convex  $\forall z\in \text{supp}(\mP)$, and the realizability assumption holds, i.e., there exists $(\u^*,\v^*)\in U\times V$ such that $ \nabla_{(\u,\v)} \DG(\u^*,\v^*;z) = 0~,~\forall z\in \text{supp}(\mP)$. Then applying AdaGrad to this objective ensures an $O(L D^2/T)$ convergence rate where $D$ is the diameter of $U\times V$. 
\end{Lem}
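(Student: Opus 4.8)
The plan is to collapse the two players into a single vector $\w := (\u,\v)$ and treat $\DG(\w) = \E_{z\sim\mP}\DG(\w;z)$ as an ordinary $L$-smooth, convex stochastic objective. Under this lens the realizability hypothesis $\nabla_{(\u,\v)}\DG(\wstar;z)=0$ for all $z\in\text{supp}(\mP)$ says exactly that the \emph{single} point $\wstar=(\u^*,\v^*)$ simultaneously minimizes every realization $\DG(\cdot;z)$, since each is convex with vanishing gradient there. This is the interpolation regime, and the engine for the fast rate is the self-bounding property of smooth convex functions: if $f$ is $L$-smooth, convex, and minimized at $\wstar$, then $\norm{\nabla f(\w)}^2 \le 2L\,(f(\w)-f(\wstar))$. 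Applying it to each realization, with $g_t=\nabla_{(\u,\v)}\DG(\w_t;z_t)$ the unbiased gradient used by AdaGrad, gives the key inequality $\norm{g_t}^2 \le 2L\,(\DG(\w_t;z_t)-\DG(\wstar;z_t))$.

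First I would invoke the standard AdaGrad regret bound for the step size $\eta_t = D/(\sum_{\tau\le t}\norm{g_\tau}^2)^{1/2}$: telescoping the distances $\norm{\w_t-\wstar}^2\le D^2$ (using that the iterates remain in the diameter-$D$ set $U\times V$) together with the elementary inequality $\sum_{t}\norm{g_t}^2/(\sum_{\tau\le t}\norm{g_\tau}^2)^{1/2}\le 2(\sum_t\norm{g_t}^2)^{1/2}$ yields an absolute constant $c$ (in fact $c=\tfrac32$) with $\sum_{t=1}^T \inner{g_t,\w_t-\wstar}\le c\,D\,(\sum_{t=1}^T\norm{g_t}^2)^{1/2}$. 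Convexity of $\DG$ and unbiasedness of $g_t$ give $\E\inner{g_t,\w_t-\wstar}=\inner{\nabla\DG(\w_t),\w_t-\wstar}\ge \E[\DG(\w_t)-\DG(\wstar)]$, so writing $R_T:=\sum_{t=1}^T\E[\DG(\w_t)-\DG(\wstar)]$ and $S_T:=\sum_{t=1}^T\norm{g_t}^2$, Jensen applied to the concave square root gives $R_T \le c\,D\,\E[\sqrt{S_T}]\le c\,D\,\sqrt{\E[S_T]}$.

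The crux is then closing the loop with the self-bounding inequality: taking expectations in $\norm{g_t}^2 \le 2L(\DG(\w_t;z_t)-\DG(\wstar;z_t))$ and using the tower rule (conditioning on $\w_t$) gives $\E[S_T]\le 2L\,R_T$, whence $R_T \le c\,D\sqrt{2L\,R_T}$, a quadratic in $\sqrt{R_T}$ that rearranges to $R_T \le 2c^2 L D^2$. Dividing by $T$ and applying Jensen to the averaged iterate $\bar\w_T=\frac1T\sum_{t=1}^T\w_t$ then delivers $\E[\DG(\bar\w_T)-\DG(\wstar)]\le R_T/T = O(LD^2/T)$, the claimed rate. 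I expect the main obstacle to be precisely this self-bounding-plus-rearrangement step rather than the regret bound: the AdaGrad guarantee alone only yields the slow $O(1/\sqrt T)$ rate, and the entire improvement hinges on feeding the interpolation estimate $\E[S_T]\le 2LR_T$ back into $R_T \lesssim D\sqrt{\E[S_T]}$ and solving for $R_T$. The remaining care is bookkeeping with the expectations (Jensen for $\E\sqrt{S_T}\le\sqrt{\E S_T}$ and conditioning for the cross terms), which is routine once the realizability structure is exploited.
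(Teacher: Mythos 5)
Your proposal is correct and follows essentially the same route as the paper's own proof: the self-bounding inequality $\|\nabla \DG(x;z)\|^2 \le 2L\left(\DG(x;z)-\DG(x^*;z)\right)$ under realizability to control $\E\|g_t\|^2$, the standard AdaGrad regret bound combined with Jensen's inequality to get $\sum_{t=1}^T\E[\DG(x_t)-\DG(x^*)] \lesssim D\sqrt{\E\sum_{t=1}^T\|g_t\|^2}$, and then feeding the gradient bound back in and solving the resulting quadratic inequality before averaging the iterates. The only cosmetic difference is that you re-derive the AdaGrad regret bound where the paper cites it, so no substantive gap remains.
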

\vspace{-1mm}
A direct consequence of Lemma~\ref{lem:AdaGrad} together with Theorem~1 in~\cite{grnarova2019domain} is that our algorithm is guaranteed to converge to a point of zero divergence between the true and generated distributions.

%%%%%%%%%%%%%%%%%%%%

\vspace{-2mm}
\subsection{Algorithm}
\label{sec:algorithm}
\vspace{-2mm}
In the previous section we established convergence guarantees for DG that are on par with the ones found in the GAN literature, e.g.~\citep{goodfellow2014generative, nowozin2016f}. However, computing the DG exactly requires finding the worst case max/min player or discriminator/generator, 
$\v_{\rm{worst}}=\max_{\v\in V} M(\u,\v)$, and $\u_{\rm{worst}}=\min_{\u\in U} M(\u,\v)$ in Eq.~\ref{eq:DG}. In practice, this is not always feasible or computationally efficient, and we therefore estimate $\v_{\rm{worst}}$ and $\u_{\rm{worst}}$ using gradient-based optimization with a finite number of steps denoted by $k$ (see Alg.~\ref{alg:mindg}). Similar procedures are used in the GAN literature, where each player only solves its own loss approximately, potentially using more steps as suggested in~\cite{goodfellow2014generative, arjovsky2017wasserstein} or using an unrolling procedure~\citep{metz2016unrolled}. To speed up the optimization, we initialize the networks using the parameters of the adversary at the particular step being evaluated. As discussed in ~\cite{grnarova2019domain} this initialization scheme does not only speed up optimization, but also ensures that the practical DG is non-negative as well. Next we demonstrate that the approximation of the DG enjoys the desired properties and still leads to learning good generative models.
\begin{algorithm}[h]
\caption{Minimizing approx-DG}
\label{alg:mindg}
\begin{algorithmic}
\STATE \textbf{Input}: \#steps $T$, game objective $M(\u,\v)$, $k(=10)$ \\
\FOR{$t=1 \ldots T$ }
\STATE \text{set} $\u_{w_0}=\u_{t}$  and $\v_{w_0}=\v_{t}$
%\STATE compute $\u_{worst}$ and $\v_{worst}$.\\
{\small
\FOR{$i=1 \ldots k$ } 
\item[] \text{\%\% Computing  $\u_{\rm{worst}}$ and $\v_{\rm{worst}}$\%\%}
%\text{for i=1 $\ldots$ k do:}
\begin{align*}
\u_{w_{i}} \gets\u_{w_{i-1}}- \gamma_i\cdot\nabla_{\u_{w}}M(\u_{w_{i}}, \v_{t})
\\
\v_{w_{i}} \gets\v_{w_{i-1}}+  \gamma_i\cdot\nabla_{\v_{w}}M(\u_{t}, \v_{w_{i}})
\end{align*}
\ENDFOR
}
%\STATE{Set:}$\u_{\rm{worst}} = \u_{w_{k}}$ and $\v_{\rm{worst}}= \v_{w_{k}}$
\STATE{calculate:} $DG(\u_{t}, \v_{t}) = M(\u_t,\v_{w_{k}}) - M(\u_{w_{k}},\v_t)$ 
\\
\STATE 
\begin{align*}
  \u_{t+1} \gets\u_{t}- \eta_t\cdot\nabla_{\u}DG(\u_{t}, \v_{t}) \\
\v_{t+1} \gets\v_{t}- \eta_t\cdot\nabla_{\v}DG(\u_{t}, \v_{t})  
\end{align*}
\ENDFOR
\end{algorithmic}
\end{algorithm}
The effect of $k$ for approximating the DG has been analyzed in~\cite{grnarova2019domain}. In summary, even for small values of $k$, the DG accurately reflects the (true) DG and the quality of the generative model. This is further supported by~\cite{schafer2019implicit} that shows it takes a few update steps for the discriminator to pick up on the quality of the generator. 

\textbf{Landscape} As mentioned, the DG converts the setting from adversarial minimax game with the goal to find a (local) NE to an optimization setting. In Fig.~\ref{fig:approximate_landscape} we show the minimax landscape of a game with 3 NE and one bad stationary point~\citep{mazumdar2019finding} and its transformation when the objective changes to DG. We show both the true (theoretical) DG, as well as approximations for various values of $k$, and demonstrate they are able to closely approximate the true landscape, especially around the critical points. In particular, the DG value is the lowest for the NE, and is high for the bad stationary point across all approximations.

\begin{figure*}
\centering
\begin{subfigure}{.21\textwidth}
  \centering
  \includegraphics[width=1\textwidth]{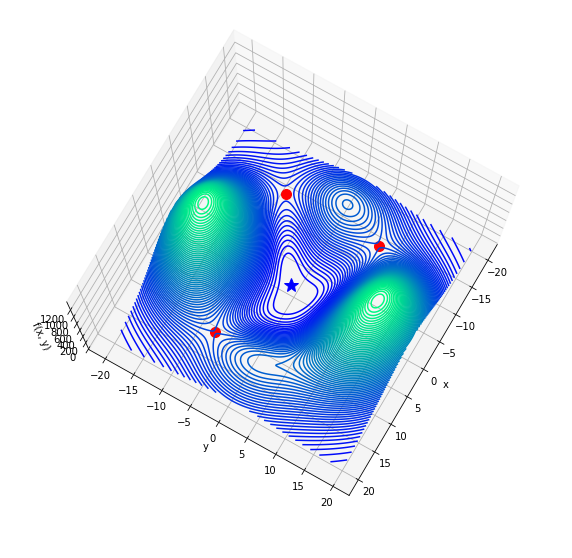}
  \caption{Minimax}
  \label{fig:sub11}
\end{subfigure}%
\begin{subfigure}{.21\textwidth}
  \centering
  \includegraphics[width=1\textwidth]{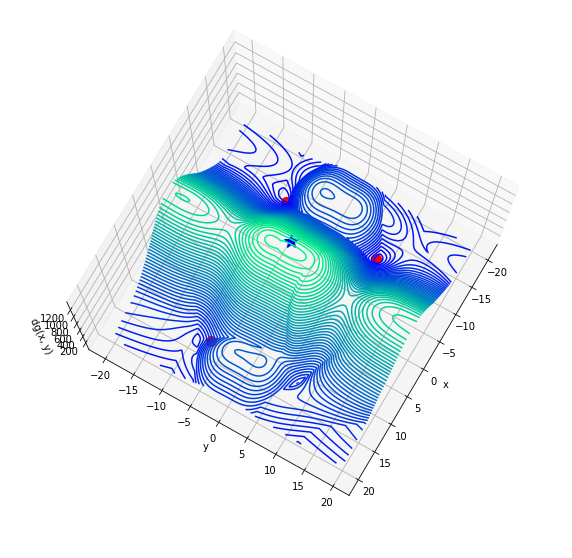}
  \caption{DG k=10}
  \label{fig:sub12}
\end{subfigure}%
\begin{subfigure}{.21\textwidth}
  \centering
  \includegraphics[width=1\textwidth]{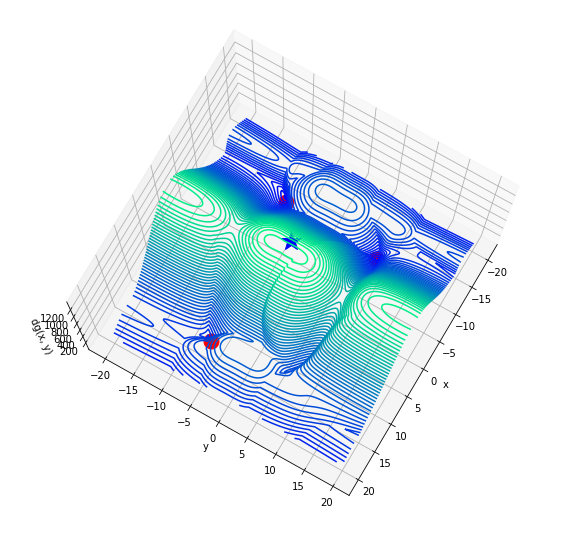}
  \caption{DG k=25}
  \label{fig:sub13}
\end{subfigure}%
\begin{subfigure}{.21\textwidth}
  \centering
  \includegraphics[width=1\textwidth]{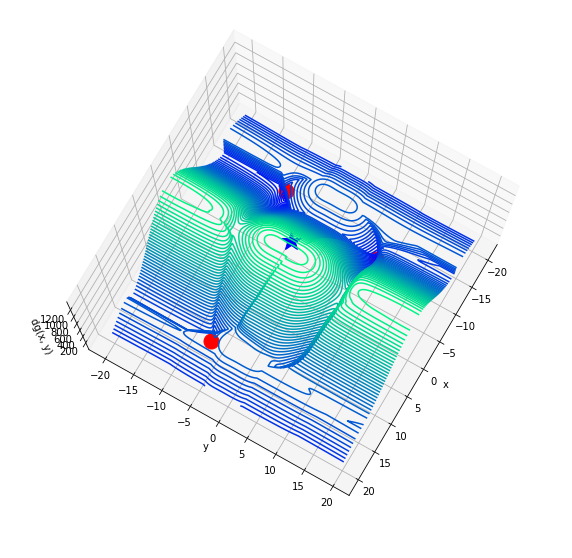}
  \caption{DG k=50}
  \label{fig:sub14}
\end{subfigure}%
\begin{subfigure}{.2\textwidth}
  \centering
  \includegraphics[width=1\textwidth]{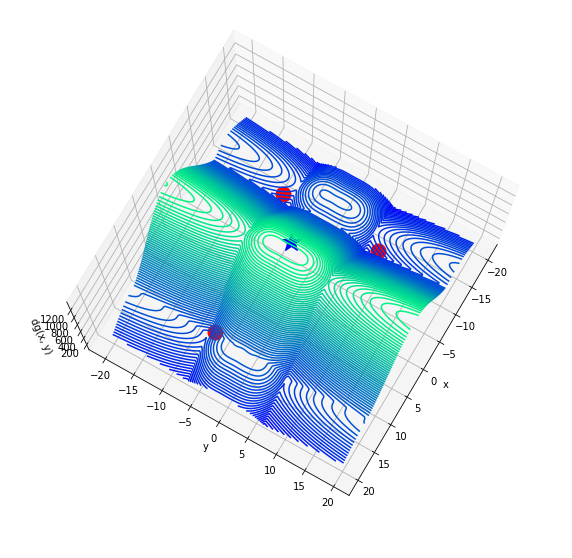}
  \caption{DG (true)}
\end{subfigure}
\caption{Landscape of a game with 3 local NE (red dots) and one bad stationary point (blue star). a) is the minmax setting with the goal to converge to a red dot. e) is the true DG and b-d) its approximations for different k. The red points have lowest DG values (b-e)}
\label{fig:approximate_landscape}
\vspace{-5mm}
\end{figure*}
\vspace{-0.1cm}
\textbf{Convergence} In Fig.~\ref{fig:ccandncc_toy} we empirically demonstrate that using DG as an objective in a convex-concave setting yields convergence to the solution of the game, both for the true and the approximate DG (similar examples for nonconvex-concave setting can be found in the appendix). Since this example is commonly analyzed in the literature, we follow by a stability analysis of the practical algorithm.

\begin{figure}
\centering
\begin{subfigure}{.2\textwidth}
  \centering
  \includegraphics[width=1\textwidth]{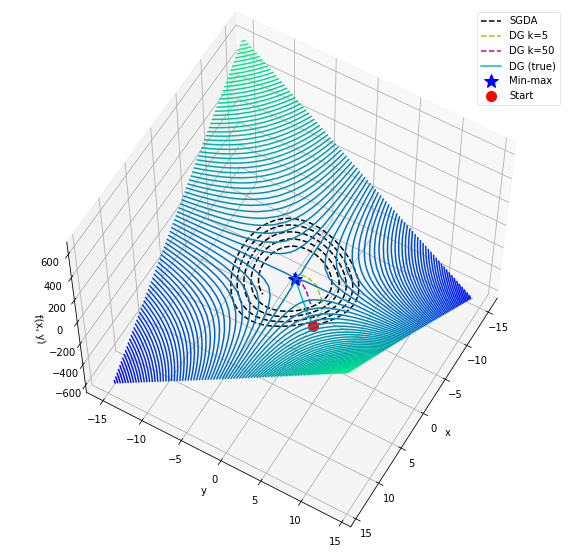}
  \caption{c=3}
\end{subfigure}%
\begin{subfigure}{.2\textwidth}
  \centering
  \includegraphics[width=1\textwidth]{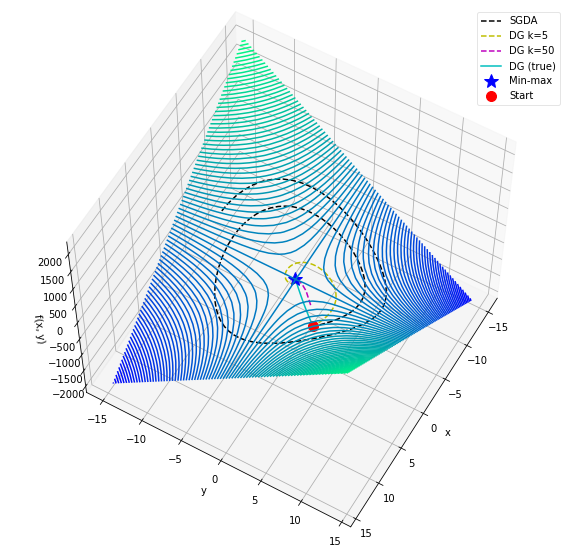}
  \caption{c=10}
\end{subfigure}
\caption{DG (and its approximations) converge, whereas GDA either oscillates or converges very slowly.}
\label{fig:ccandncc_toy}
\vspace{-5mm}
\end{figure}

% \begin{figure}
% \centering
% \begin{subfigure}{.3\textwidth}
%   \centering
%   \includegraphics[width=1\textwidth]{ICML_2021/figures/toy_sgda_vs_dg/cc_c3_toy.png}
%   \caption{c=3}
%   \label{fig:sub1}
% \end{subfigure}%
% \begin{subfigure}{.3\textwidth}
%   \centering
%   \includegraphics[width=1\textwidth]{ICML_2021/figures/toy_sgda_vs_dg/cc_c10_toy.png}
%   \caption{c=10}
%   \label{fig:sub2}
% \end{subfigure}
% \begin{subfigure}{.3\textwidth}
%   \centering
%   \includegraphics[width=1\textwidth]{ICML_2021/figures/toy_sgda_vs_dg/cc_c10_toy.png}
%   \caption{c=10}
%   \label{fig:sub2}
% \end{subfigure}
% \caption{a) and b) Convex-concave game: DG (and its approximations) converge, whereas GDA either oscillates or converges very slowly. c) Nonconvex-nonconcave game: DG still converges.}
% \label{fig:ccandncc_toy}
% \vspace{-5mm}
% \end{figure}

\vspace{-0.2cm}
\subsection{Stability behavior}
\vspace{-0.2cm}
Finally, we analyze the stability behavior of optimizing the duality gap for the game $f(x, y) = cxy, c \in \{3, 10\}$ shown in Fig.~\ref{fig:ccandncc_toy}. We provide a brief summary and give a detailed derivation in Appendix \ref{app:bilinear_games}.
\vspace{-0.3cm}
\paragraph{SGD}
The SGD updates are:
\begin{align*}
   \begin{bmatrix}
    x_{t+1} \\
    y_{t+1}
  \end{bmatrix}
=
   \begin{bmatrix}
    1 \quad -\eta c \\
    \eta c \quad \quad 1
  \end{bmatrix}
   \begin{bmatrix}
    x_t \\
     y_t
  \end{bmatrix}
\end{align*}
The eigenvalues of the system are $\lambda_1 = 1-i \cdot \eta c$ and $\lambda_2=1+i \cdot \eta c$, which leads either to oscillations or divergent behavior depending on the value of c. 
\vspace{-0.3cm}
\paragraph{Duality gap}
The duality gap function can be defined as $dg(x, y) = c(x \cdot y_w - x_w \cdot y)$ with updates:
\begin{align*}
   \begin{bmatrix}
    x_{t+1} \\
    y_{t+1}
  \end{bmatrix}
=
   \begin{bmatrix}
    1 -\eta^2 k c & - \eta \\
    \eta & 1 - \eta^2 k c
  \end{bmatrix}
  \begin{bmatrix}
    x_{t} \\
    y_{t}
  \end{bmatrix}
\end{align*}

The eigenvalues of the above system are
\begin{equation*}
\lambda_{1,2} = - \eta^2 k c + 1 \pm i \eta.
\end{equation*}

%The algorithm converges for all $\eta^2 <= \frac{2kc}{k^2c^2}$.
We therefore have stability as long as $\eta < \frac{2kc - 1}{k^2 c^2}$, which requires the number of updates to be $k > \frac{1}{2c}$. As can be seen by examining the DG updates with a finite $k$, there is an additional term that appears in the updates that "contracts" the dynamics and leads to convergence, even for $k=1$.

In Appendix~\ref{app:updates_toy}, we also analyze the stability of the DG updates for other games that appear in Fig.~\ref{fig:toyexample}. We observe the same behavior where DG is stable, in contrast to SGD and related alternatives.
\vspace{-3.5mm}
\section{Related work}
\vspace{-2mm}

Stabilizing GAN training has been an active research area: in proposing new objectives~\citep{arjovsky2017wasserstein}, adding regularizers~\citep{gulrajani2017improved, roth2017stabilizing} or designing better architectures~\citep{radford2015unsupervised}. In terms of optimization, the rotational dynamics of games has been pointed out as a cause of the complexity~\citep{mescheder2018training, balduzzi2018mechanics}, which was recently empirically demonstrated as well~\citep{berard2019closer}. To overcome oscillations various works have explored iterate or model averaging~\citep{gidel2018variational, grnarova2017online} or used second-order information~\citep{balduzzi2018mechanics, wang2019solving}. Other alternatives include training using optimism~\citep{daskalakis2017training} which extrapolates the next value of the gradient or~\citet{gidel2018variational} proposing a variant of extragradient that anticipates the opponent's actions, although was recently shown to break in simple settings~\citep{chavdarova2019reducing}.~\citet{razaviyayn2020nonconvex} give a survey of recent advances in min-max optimization.

The idea of duality for improving GAN training has been explored in simple settings; e.g~\cite{li2017dualing} show stabilization using the dual of linear discriminators. ~\cite{farnia2018convex} rely on duality to give a different interpretation to GANs with constrained discriminators and~\cite{gemici2018primal} use the dual formulation of WGANs to train the decoder. A recent approach~\citep{chen2018training} uses a type of Lagrangian duality in order to derive an objective to train GANs although it is not directly relatable to the original GAN as it is based on an adhoc assumption on finite set. Finally,~\citet{grnarova2019domain} proposed to use the duality gap, although not for optimization, but as an evaluation metric to monitor the training progress.
\vspace{-2.4mm}
\section{Experiments}
\label{sec:experiments}
\vspace{-2mm}

We turn to an empirical evaluation of our theory for a wide range of problems commonly discussed in the literature~\citep{berard2019closer, metz2016unrolled, wang2019solving}.

\subsection{Convergence analysis}
\vspace{-1mm}

As previously discussed, gradient descent ascent (GDA) and many related gradient-based algorithms exhibit undesirable stability properties when used for solving games. Most of these instabilities can in fact be observed on simple toy problems where we will start our investigation before moving on to more complex problems. First, we demonstrate two fundamental properties of our algorithm (i) convergence to (local) minima (which correspond to (local) Nash equilibria in the original game objective) while GDA either diverges or goes into limit cycles and (ii) avoiding convergence to bad critical points to which GDA is attracted to.

To that end, we compare DG to a variety of existing optimization algorithms: (i) Gradient Descent Ascent (GDA), (ii) Optimistic Gradient Descent Ascent (OGDA) ~\citep{daskalakis2017training}, (iii) Extragradient (EG)~\citep{korpelevich1976extragradient}, (iv) Symplectic Gradient Adjustment (SGA)~\citep{balduzzi2018mechanics}, (v) Concensus Optimization (CO)~\citep{mescheder2018training}, (vi) Unrolled SGDA~\citep{metz2016unrolled} and (vii) Follow-the-Ridge (FR)~\citep{wang2019solving} on three simple low-dimensional problems (Fig. \ref{fig:toyexample}). These functions were proposed in \cite{wang2019solving}:
\begin{align*}
    f_1(x, y) &= -3x^2 -y^2 +4xy, f_2(x, y) = 3x^2 + y^2 + 4xy & \\
    f_3(x, y) &= (4x^2 - (y - 3x + 0.05x^3)^2 -0.1y^4)e^{-0.01(x^2 + y^2)}.
\end{align*}
\vspace{-0.1cm}
The first two functions (see Fig. \ref{fig:toyexample}, left and middle panels) are two-dimensional quadratic problems while the third function (Fig. \ref{fig:toyexample} right) has a more complicated landscape due to the sixth-order polynomial being scaled by an exponential. The first function has a local (and global) minimax at (0, 0). In Fig. \ref{fig:toyexample} (left) it can be seen that only DG, FR, SGA and CO converge to it, while other methods diverge. For the second function, (0, 0) is not a local minimax (it is a local min for the max player); yet all algorithms except for DG, Unrolled GD and FR converge to this undesired stationary point. Finally, for the polynomial function (right), (0, 0) is again a local minimax, but most methods cycle around the equilibrium. Again, DG and FR are able to avoid the oscillating behaviour and converge to the correct solution. 

\begin{figure}[htp]
\centering
\begin{minipage}[c]{0.51\textwidth}
\includegraphics[width=.33\textwidth]{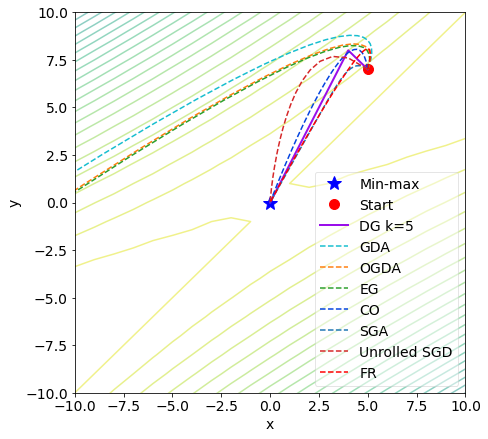}\hfill
\includegraphics[width=.33\textwidth]{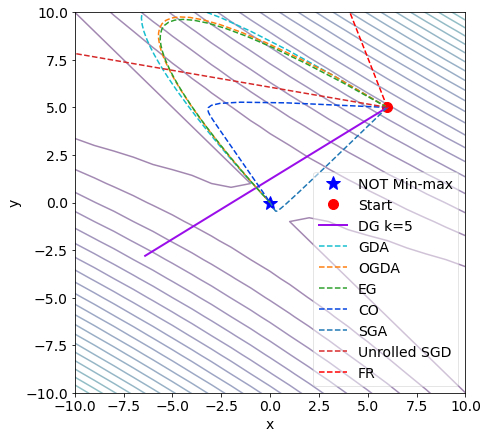}\hfill
\includegraphics[width=.33\textwidth]{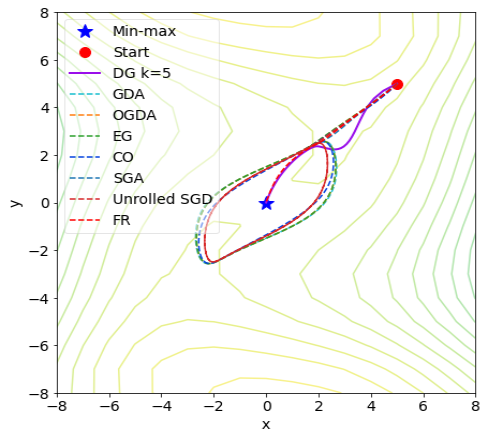}
\end{minipage}
\caption{\label{fig:toyexample} Contours represent function values. DG converges to the solution for left and right, while avoiding bad stationary point in the middle.}
\end{figure}

% \begin{figure*}[htp]
% \centering
% \begin{minipage}[c]{0.51\textwidth}
% \includegraphics[width=.33\textwidth]{ICML_2021/figures/experiments/toy_examples/sgo_1_color.png}\hfill
% \includegraphics[width=.33\textwidth]{ICML_2021/figures/experiments/toy_examples/sgo_2_color.png}\hfill
% \includegraphics[width=.33\textwidth]{ICML_2021/figures/experiments/toy_examples/sgo_3_color_resized.png}

% \caption{\label{fig:toyexample} Contours represent function values. DG converges to the solution for left and right, while avoiding bad stationary point in the middle.}
% \end{minipage}
% \begin{minipage}[c]{0.45\textwidth}
%     \includegraphics[width=0.6\linewidth]{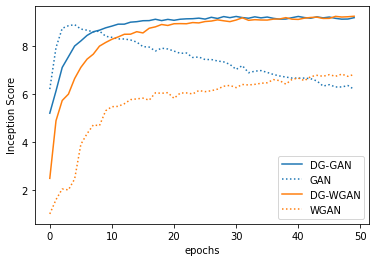}
% \caption{\label{fig:gans_vs_dggans}MNIST Inception score}
% \end{minipage}
% \end{figure*}

Overall it can be observed that most existing algorithms fail on even simple toy examples, an observation made in previous works as well (e.g. \cite{wang2019solving, mescheder2018training, gidel2018variational}).  In contrast, we observe a positive behavior for the DG objective, both around good as well as bad stationary points. The reason for this is the "correction" term that appears due to the updates. For analytical analysis and further intuition, see Appendix~\ref{app:updates_toy}.

\vspace{-0.2cm}
\subsection{Generative Adversarial Networks}
\vspace{-0.2cm}

We now turn to the problem of training GANs using DG as an objective function. As discussed in Sec.~\ref{sec:theory}, we sample pairs $(x, z) \in \Xcal \times \Zcal$ such that they satisfy the condition $G(z)=x$, for some $G: \Zcal \to \Xcal$. This can be achieved by using an existing GAN architecture with an encoder such as BiGAN~\citep{donahue2016adversarial}, ALI~\citep{dumoulin2016adversarially} and BigBiGAN~\citep{donahue2019large}.
In a nutshell, the encoder $E: \Xcal \to \Zcal$ provides an inverse mapping by projecting data back into the latent space. The discriminator then not only classifies a real/generated datapoint, but instead receives as an input a pair of a datapoint and a corresponding latent vector - $(x, E(x))$ and $(G(z), z)$. In~\citep{donahue2019large}, the authors demonstrate that a certain reconstruction ensures the condition $G(z)=x$ holds with $z=E(x)$. We exploit such a construction for optimizing the DG objective. Halfway through the training, we start training with pairs $(G(E(x)), E(x))$ and $(x, E(x))$ in order to achieve the computation of the DG with pairs that satisfy the aforementioned condition. The number of update steps is chosen from $k \in [5, 10, 25]$ and we optimize using Adam. All training details and additional baselines can be found in Appendix~\ref{app:experiments}.

\vspace{-0.3cm}
\subsubsection{Mixture of Gaussians}

We first evaluate 5 different algorithms (GDA, ALI, EG, CO and DG) on a Gaussian mixture with the original GAN saturating loss in a setting shown to be difficult for standard GDA ~\citep{wang2019solving} (Fig.~\ref{fig:mog}). The model does not converge with GDA or EG and suffers from mode collapse. Only DG and CO are able to learn the data distribution, however DG quickly converges to the solution. We also include results for ALI that show its instability on this toy problem. In particular, the stability of all models can be seen by the progression of the DG metric throughout the training (last row in Fig.~\ref{fig:mog}). For DG, we see the gap quickly goes to zero. 

\begin{figure}
\centering
    \includegraphics[width=0.51\linewidth]{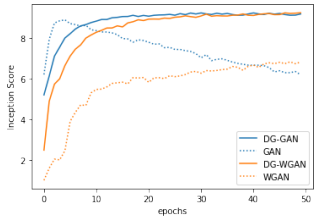}
\caption{\label{fig:gans_vs_dggans}MNIST Inception score}
\end{figure}

\vspace{-0.3cm}
\begin{figure*}
  \begin{minipage}[c]{0.63\textwidth}
    \includegraphics[width=0.98\textwidth]{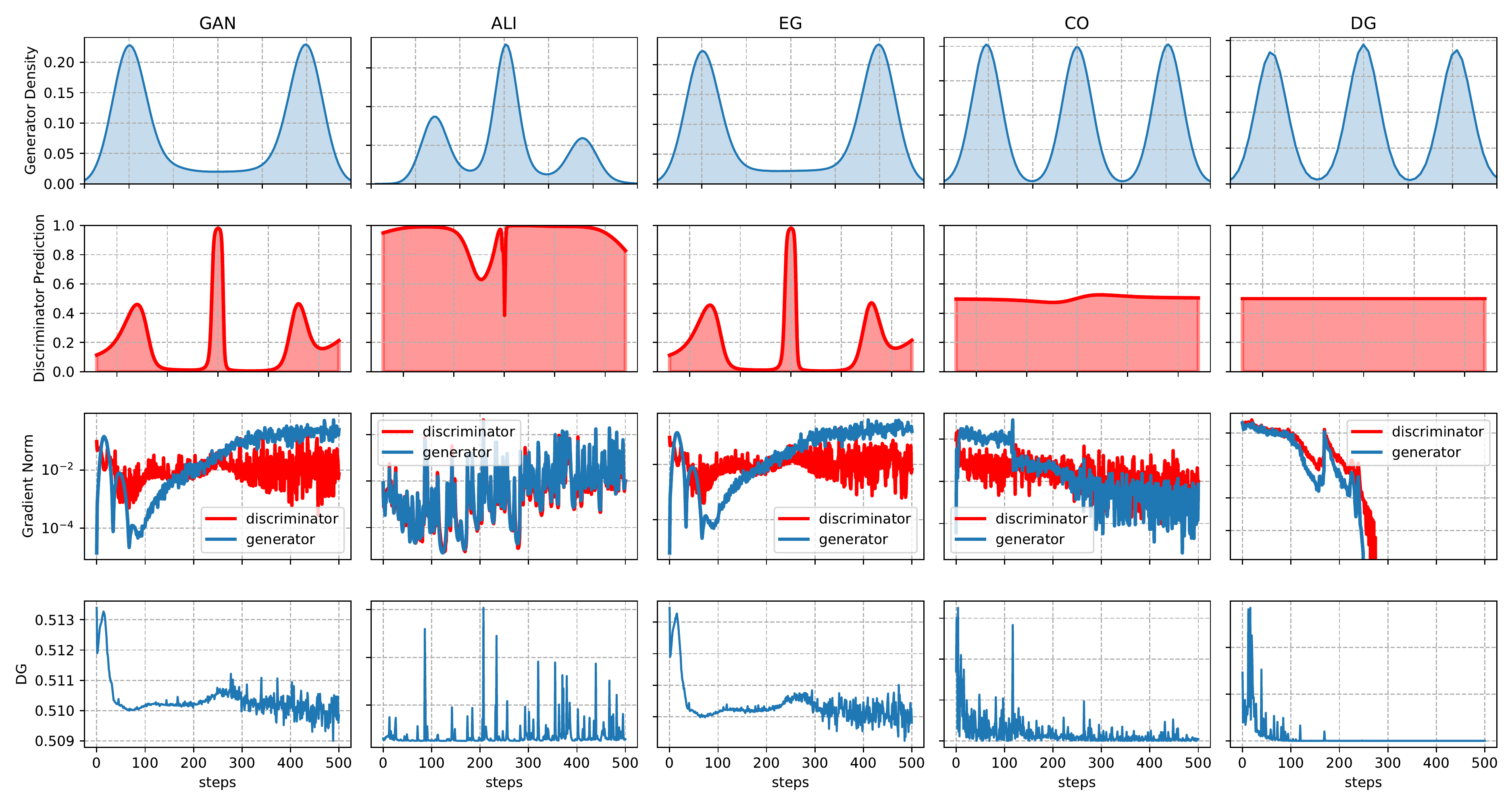}
  \end{minipage}\hfill
  \begin{minipage}[c]{0.37\textwidth}
    \caption{GANs with saturating loss on a Gaussian mixture. \textbf{Row 1}: Generator Density. Only CO and DG capture all modes. \textbf{Row 2}: Discriminator prediction. For a perfect generator, the discriminator outputs 0.5. \textbf{Row 3}: Gradient norm of the players. \textbf{Row 4}: Duality gap metric throughout training. DG quickly converges and displays stability.}
    \vspace{0.2cm}
    \label{fig:mog}
  \end{minipage}
\end{figure*}
\begin{figure}[htp]
\centering
\begin{subfigure}[b]{0.23\textwidth}
\includegraphics[width=\textwidth]{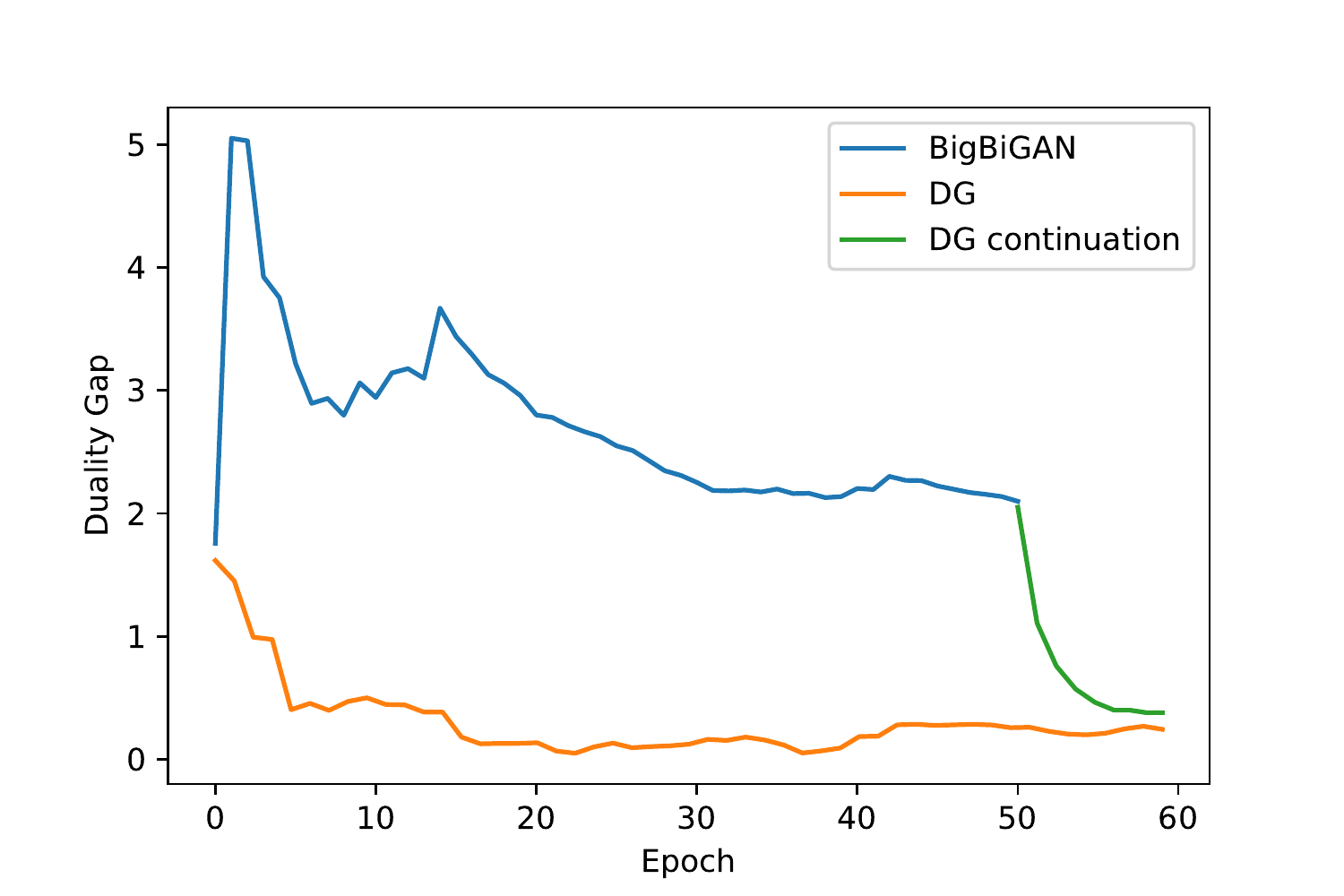}
\end{subfigure}
\begin{subfigure}[b]{0.23\textwidth}
\includegraphics[width=\textwidth]{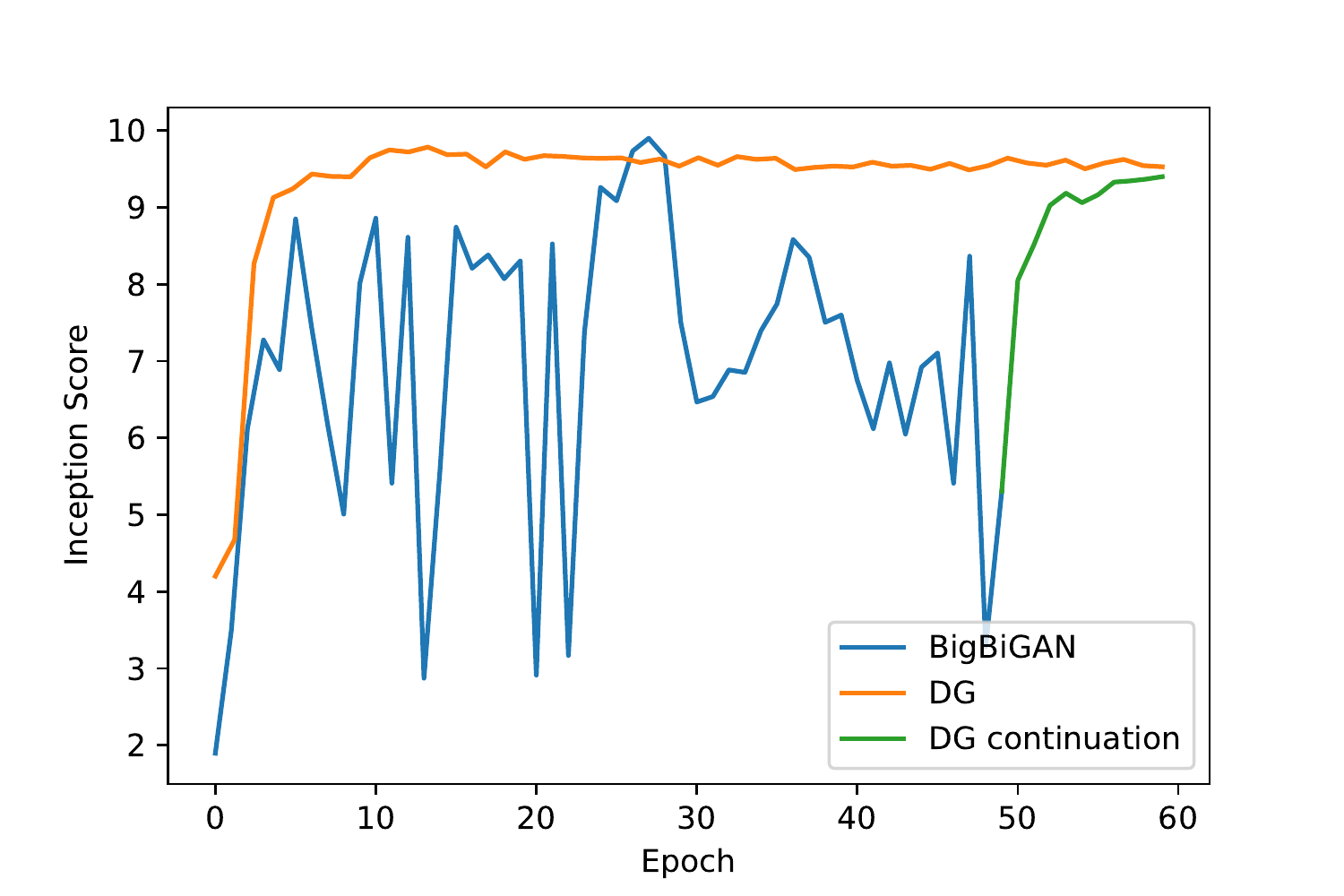}
\end{subfigure}
\begin{subfigure}[b]{0.23\textwidth}
\includegraphics[width=\textwidth]{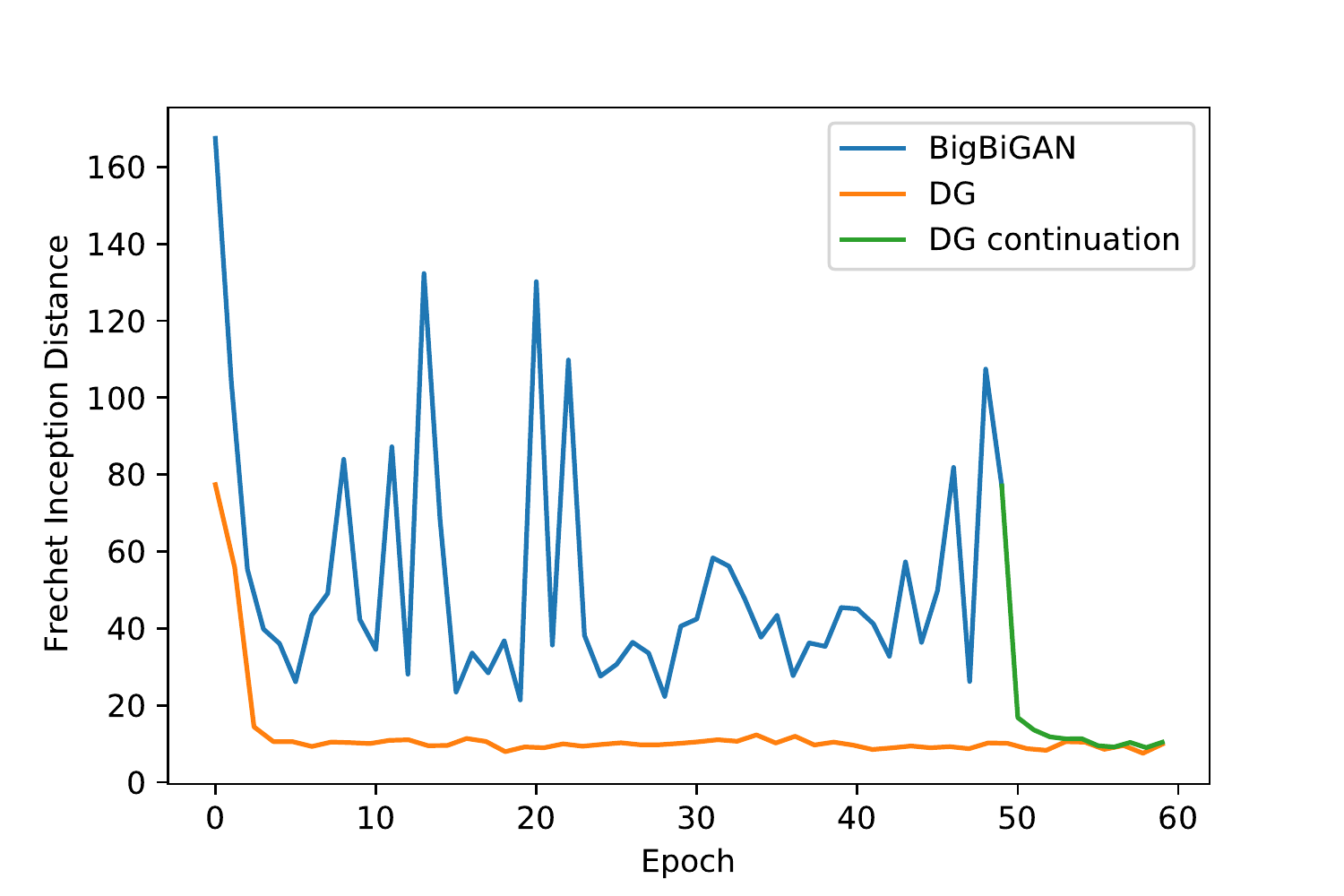}
\end{subfigure}
\begin{subfigure}[b]{0.22\textwidth}
\includegraphics[width=\textwidth]{ICML_2021/figures/experiments/bigbigan/lr_window.pdf}
\end{subfigure}
\caption{BigBiGAN (blue) vs. DG (orange) on MNIST. \textbf{a)} Duality Gap metric. \textbf{b)} MNIST IS. \textbf{c)} MNIST FID. The green curve corresponds to taking the last checkpoint of the BigBiGAN model once converged and further training it with the DG objective. \textbf{d)} Number of iterations until a model reaches IS of at least 8.0 across different learning rates.}
\label{fig:bigbigan}
\end{figure}
\vspace{-0.1cm}

\begin{table}[htp]
    \centering
    \resizebox{0.5\textwidth}{!}{
\begin{tabular}{@{}lc|cc@{}}
\toprule
\multicolumn{2}{c}{\textbf{GAN based}}                 & \multicolumn{2}{c}{\textbf{WGAN based}}       \\ \midrule
\multicolumn{1}{c}{\textbf{Model}} & \textbf{IS}       & \textbf{Model}              & \textbf{IS}      \\
GAN (Adam)                         & 8.58 $\pm$ 0.006  & \multicolumn{1}{l}{WGAN}    & 7.41 $\pm$ 0.029 \\
GAN (ExtraAdam)                    & 8.80 $\pm$ 0.021  &                             &                  \\
GAN (SGD)                          & 8.19 $\pm$ 0.017  &                             &                  \\
GAN (RMSProp)                      & 8.69 $\pm$ 0.013  &                             &                  \\
BiGAN/ALI                          & 8.65 $\pm$ 0.081  & \multicolumn{1}{l}{WGAN GP} & 9.28 $\pm$ 0.004 \\
DG                                 & \textbf{9.265 $\pm$ 0.021} & \multicolumn{1}{l}{DG WGAN} & \textbf{9.59 $\pm$ 0.014} \\ \bottomrule
\end{tabular}}
      \captionof{table}{\label{tab:is_score} MNIST Inception score averaged over 5 runs. The average IS score of real MNIST data is 9.9.}
\end{table}
\subsubsection{Generating images}
%\vspace{+0.5mm}

We showed that when training by optimizing DG, the algorithm (i) exhibits desirable convergence properties and (ii) is more stable. Next we look at how this leads to improvements in generating real images. 
%\vspace{-0.3cm}

\textbf{Exploring the optimization landscape.}
We train a GAN on MNIST with a DCGAN architecture~\citep{radford2015unsupervised} and spectral normalization. The adversarial losses we consider are: (i) variants of the GAN objective (optimized with different algorithms including SGD, Adam, RmsProp and ExtraAdam~\citep{gidel2018variational}) and (ii) several variants of WGAN~\citep{arjovsky2017wasserstein}  (and with gradient penalty WGAN-GP~\citep{gulrajani2017improved}). Note that our optimization setting can be applied to any minimax formulation of GANs by optimizing the DG for the specific game objective respectively (GAN, WGAN etc.). We denote DG applied to the GAN objective as DG GAN and to the WGAN objective as DG WGAN, and report the Inception Score (IS) ~\citep{salimans2016improved} in Tab.~\ref{tab:is_score} computed using an MNIST classifier. The optimization setup improves the scores for both objectives. In fact, there is a gap in the scores between the standard, adversarial, and our optimization setting throughout the entire training (see Fig.~\ref{fig:gans_vs_dggans}). 

In addition, we follow the setup from~\cite{berard2019closer} and investigate extensively the rotational and convergence behaviours of the models (See Fig.~\ref{fig:game_dynamics}). Overall, while GANs exhibit rotations and converge to points that are a saddle, instead of a local $\min$ for the generator, DG is more stable, converges to local NEs and avoids the recurrent dynamics, which is also reflected by the improved IS scores.
\vspace{-0.4cm}
\paragraph{Fréchet Inception Distance (FID).} In Tab.\ref{tab:different_datasets} we further compare DG to various GAN algorithms on different datasets (ranging from simple to advanced complexity): MM GAN (saturating GAN), NSGAN (non-saturating GAN), LSGAN~\citep{mao2017least}, WGAN, WGAN-GP, DRAGAN~\citep{kodali2017convergence}, BEGAN~\citep{berthelot2017began}, SNGAN~\citep{miyato2018spectral}, ExtraAdam~\citep{gidel2018negative} and DCGAN. FID ~\citep{heusel2017gans} is computed using 10K samples across 10 different runs using the features from the Inception Net for all datasets except CelebA, for which we use VGG. We again see that minimizing the DG translates to practical improvement through obtaining better (or comparable) results.

\hspace*{-0.8cm}
\begin{table}
\resizebox{0.49 \textwidth}{!}{
\begin{tabular}{lllll}
\toprule
Alg/FID   & MNIST      & FMNIST      & CIFAR10     & CelebA      \\ \midrule
MMGAN     & 9.8 ± 0.9  & 29.6 ± 1.6  & 72.7 ± 3.6  & 65.6 ± 4.2  \\
NSGAN     & 6.8 ± 0.5  & 26.5 ± 1.6  & 58.5 ± 1.9  & 55.0 ± 3.3  \\
LSGAN     & 7.8 ± 0.6  & 30.7 ± 2.2  & 87.1 ± 47.5 & 53.9 ± 2.8  \\
WGAN      & 6.7 ± 0.4  & 21.5 ± 1.6  & 55.2 ± 2.3  & 41.3 ± 2.0  \\
WGAN GP   & 20.3 ± 5.0 & 24.5 ± 2.1  & 55.8 ± 0.9  & 30.0 ± 1.0  \\
DRAGAN    & 7.6 ± 0.4  & 27.7 ± 1.2  & 69.8 ± 2.0  & 42.3 ± 3.0  \\
BEGAN     & 13.1 ± 1.0 & 13.1 ± 1.0  & 71.4 ± 1.6  & 38.9 ± 0.9  \\\midrule
SNGAN     & 6.5 ± 0.2  & 24.3 ± 1.2  & 19.22 ± 2.4 & 46.2  ± 3.4 \\
ExtraAdam & 6.6 ± 0.8  & 24.1 ± 0.8  & 17.31 ± 1.8 & 40.1 ± 2.8  \\
DCGAN     & 7.8 ± 0.3  & 29.01 ± 0.4 & 21.12 ± 2.0 & 54.6 ± 3.2  \\
DG (k=10) & 6.0 ± 0.6  & 20.03 ± 1.1 & 13.66 ± 2.2 & 30.1 ± 2.0 \\
\bottomrule
\end{tabular}
}
\caption{\label{tab:different_datasets} FID for various algorithms. Results from the first row taken from \cite{lucic2018gans}}.
\end{table}

\vspace{-0.3cm}
\textbf{Training BigBiGANs by optimizing DG.}
\label{sec:bigbigan}
As previously discussed, moving from an \textit{adversarial} to an \textit{optimization} setting presents several benefits which we highlight in the following experiment. We train a low-resolution BigBiGAN~\citep{donahue2019large} on MNIST, Fashion-MNIST and Cifar10 and report the corresponding FID for conditional and unconditional models trained via DG and the BigBiGAN objective. Tab.~\ref{tab:fid} shows that the FID scores for DG are substantially improved, consistently across all datasets and settings. In fact, one can demonstrate increased benefits of the optimization setting by further training the final converged models obtained with the adversarial loss. This yields significant improvements, as shown in the last rows of Table~\ref{tab:fid} (see GAN + DG).

% \begin{figure}[htp]
% \centering
% \begin{subfigure}[b]{0.23\textwidth}
% \includegraphics[width=\textwidth]{ICML_2021/figures/experiments/bigbigan/dgap_cifar10.pdf}
% \end{subfigure}
% \begin{subfigure}[b]{0.23\textwidth}
% \includegraphics[width=\textwidth]{ICML_2021/figures/experiments/bigbigan/mnist_Score_test.pdf}
% \end{subfigure}
% \begin{subfigure}[b]{0.23\textwidth}
% \includegraphics[width=\textwidth]{ICML_2021/figures/experiments/bigbigan/mnist_Frechet_Distance_test.pdf}
% \end{subfigure}
% \begin{subfigure}[b]{0.22\textwidth}
% \includegraphics[width=\textwidth]{ICML_2021/figures/experiments/bigbigan/lr_window.pdf}
% \end{subfigure}
% \caption{BigBiGAN (blue) vs. DG (orange) on MNIST. \textbf{a)} Duality Gap metric. \textbf{b)} MNIST IS. \textbf{c)} MNIST FID. The green curve corresponds to taking the last checkpoint of the BigBiGAN model once converged and further training it with the DG objective. \textbf{d)} Number of iterations until a model reaches IS of at least 8.0 across different learning rates.}
% \label{fig:bigbigan}
% \end{figure}
\vspace{-0.1cm}
\begin{table}[htp]
\small
\resizebox{0.49 \textwidth}{!}{
\begin{tabular}{cccc}
\toprule
       & \textbf{CIFAR U} & \textbf{MNIST U} & \textbf{FMNIST U} \\
       \midrule
DG     & 22.14   & 6.11    & 20.33    \\
GAN    & 24.41   & 10.56   & 22.55    \\
GAN+DG & 22.26   & 6.76    & 20.41    \\
       & \textbf{CIFAR C} & \textbf{MNIST C} & \textbf{FMNIST C} \\
       \midrule
DG     & 21.45   & 6.05    & 20.21    \\
GAN    & 23.00   & 22.03   & 25.86    \\
GAN+DG & 21.42   & 9.91    & 21.19  \\
\bottomrule

\end{tabular}
}
\caption{\label{tab:fid} FID for a BigBiGAN trained by optimizing DG and the adversarial loss. The FID for the third rows corresponds to models initialized with the pretrained GAN weights from rows (2), further trained via the DG. (U) -  unconditional (C) - conditional model.}
\end{table}

\vspace*{-0.1cm}
DG again exhibits more stable behaviour throughout the training, as can be seen in Fig.~\ref{fig:bigbigan} a by the progression of the DG when trained on MNIST. There is a consistent gap in terms of DG for the models that can be improved by further training the BigBiGAN with the DG objective. We observe similar behaviour across the different datasets (additional plots in App.~\ref{app:experiments}). Moreover, we show the progression of the MNIST IS and MNIST FID for the two models, which again point out to DG being more stable (Fig.~\ref{fig:bigbigan} b and c).

Apart from improved stability and performance, this experiment also demonstrates another property of optimizing without the competitive aspect of the minimax objective. Since the DG becomes lower when either player gets better (irrespective of the other player), there is no competition between the two players as in standard GAN training. Instead, in GANs one player is trying to maximize, while the other is trying to minimize the objective, so in order for one player to improve their utility, they need to do better with respect to their opponent. This ultimately means that one needs to carefully adjust the learning rates of both players; if one player is dominant, the game may become unstable and stop at a suboptimal point~\citep{heusel2017gans}. In contrast, optimizing the DG (i.e. both players have the same goal), relaxes the need for tuning learning rates (Fig.~\ref{fig:bigbigan} d).
\vspace{-0.2cm}

\vspace{-2mm}
\section{Conclusion}
\vspace{-2mm}

Training GANs is a notoriously hard problem due to the min-max nature of the GAN objective. Practitioners often face a difficult hyper-parameter tuning step having to ensure none of the players overpowers the other. In this work, we proposed an alternative objective function based on the theoretically motivated concept of duality. We proved convergence of this algorithm under commonly used assumptions in the literature and we further supported our claim on a wide range of problems. Empirically, we have seen that optimizing the duality gap yields a more stable algorithm. An interesting direction for future work would be to loosen the convexity assumptions on one side of $M$~\citep{nouiehed2019solving}. Further relaxations have not yet been shown to be possible and are still an open question. Finally, one could explore alternative optimization methods for optimizing the DG such as second-order methods.

\clearpage
\newpage
\bibliography{main}
\bibliographystyle{abbrvnat}
%\setcitestyle{authoryear,open={(},close={)}}
%\bibliographystyle{apalike}

\clearpage
\newpage
\appendix
\onecolumn
\section*{Appendix}
\section{Approximate Realizability}
Recall that in Section~\ref{sec:theory} we show that under the perfect realizability assumption (Assumption~\ref{assum:realizability}) then the original stochastic minimmax problem of Eq.~\eqref{eq:MinimaxStochastic} is equivalent to solving the stochastic DG problem of Eq.~\eqref{eq:DG_Stochastic}. And this facilitates the use of the stochastic DG formulation for solving the original problem.

Here we relax the assumption of \emph{perfect realizability} and show that under an \emph{approximate realizability assumption} one can still relate the solution of the stochastic DG problem to the original minimax objective.

First let us define approximate realizability with respect to the stochastic minimax problem (Eq.~\eqref{eq:MinimaxStochastic}),
\begin{Ass}[Approximate Realizability] \label{assum:ApproxRealizability}
There exists a solution $(\u^*,\v^*)$, and $\epsilon\geq 0$, such that $(\u^*,\v^*)$ is  an $\epsilon$-approximate equilibrium of $M(\cdot,\cdot;z)$ for any $z\in \text{supp}(\mP)$, i.e.,
$$
\DG(\u^*,\v^*;z)\leq \epsilon~,
$$
where  \text{supp}($\mP$) is the support of $\mP$.
\end{Ass}
Note that taking $\epsilon=0$ in the above definition is equivalent to the perfect realizability assumption (Assumption~\ref{assum:realizability}).

Clearly, under Assumption~\ref{assum:ApproxRealizability}, the original stochastic minimax  problem is no longer equivalent to the stochastic DG minimization problem.
Nevertheless, next we  show that in this case, solving the stochastic DG minimization problem will yield a solution which is $\epsilon$-optimal with respect to the original minimax problem.  
\begin{Lem}
Consider the stochastic minimax problem of Eq.~\eqref{eq:MinimaxStochastic}, and assume that the approximate realizability assumptions holds for this problem with some  $\epsilon\geq0$.
Also, Let $(\u_{\min},\v_{\min})$ be the minimizer of stochastic DG problem, i.e.,~
$(\u_{\min},\v_{\min}) \in~\arg\min_{\u,\v}~\E_z \DG(\u,\v;z)$. 
Then $(\u_{\min},\v_{\min})$ is an $\epsilon$-approximate equilibrium of the original minimax problem (Eq.~\eqref{eq:MinimaxStochastic}).
\end{Lem}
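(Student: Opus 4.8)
The plan is to reduce the claim to a single structural inequality relating the duality gap of the \emph{averaged} objective $M(\u,\v)=\E_{z\sim\mP}M(\u,\v;z)$ to the \emph{average} of the per-sample duality gaps $\E_z\DG(\u,\v;z)$, and then to chain this with the optimality of $(\u_{\min},\v_{\min})$ and the approximate realizability bound. Throughout, $\DG(\u,\v)$ denotes the duality gap of the original problem (Eq.~\eqref{eq:MinimaxStochastic}), while $\DG(\u,\v;z)$ is the per-sample duality gap.

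First I would establish the pointwise bound
\[
\DG(\u,\v)\;\leq\;\E_{z\sim\mP}\DG(\u,\v;z)\qquad\text{for every }(\u,\v)\in U\times V.
\]
Expanding the left-hand side by definition gives $\max_{\v'\in V}\E_z M(\u,\v';z)-\min_{\u'\in U}\E_z M(\u',\v;z)$. The key observation is that the maximum of an expectation is bounded above by the expectation of the maximum: fixing the maximizer $\v'$ is a feasible (but generally suboptimal) choice for each realization $z$, so $\max_{\v'}\E_z M(\u,\v';z)\leq \E_z\max_{\v'}M(\u,\v';z)$. Symmetrically, $\min_{\u'}\E_z M(\u',\v;z)\geq \E_z\min_{\u'}M(\u',\v;z)$. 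Subtracting these two inequalities yields exactly the displayed bound. I would emphasize that this direction requires \emph{no} convex-concavity assumption; it follows purely from interchanging $\max/\min$ with the expectation.

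Second, I would invoke the definition of $(\u_{\min},\v_{\min})$ as a minimizer of $\E_z\DG(\cdot,\cdot;z)$, which gives $\E_z\DG(\u_{\min},\v_{\min};z)\leq \E_z\DG(\u^*,\v^*;z)$, and then apply Assumption~\ref{assum:ApproxRealizability}: since $\DG(\u^*,\v^*;z)\leq\epsilon$ for every $z\in\text{supp}(\mP)$, taking expectations gives $\E_z\DG(\u^*,\v^*;z)\leq\epsilon$. Chaining the three inequalities produces
\[
\DG(\u_{\min},\v_{\min})\;\leq\;\E_z\DG(\u_{\min},\v_{\min};z)\;\leq\;\E_z\DG(\u^*,\v^*;z)\;\leq\;\epsilon,
\]
which is precisely the statement that $(\u_{\min},\v_{\min})$ is an $\epsilon$-approximate equilibrium of the original minimax problem.

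The main obstacle is the first step: recognizing and justifying that the duality gap of the averaged objective is dominated by the average of the duality gaps. Once this convexity-type interchange inequality is in place, the remaining steps (optimality of the minimizer and the uniform $\epsilon$ bound from approximate realizability) are entirely routine. It is worth noting that the two inequalities comprising step one go in the right directions precisely because $\DG$ is assembled from a $\max$ minus a $\min$; had the structure been reversed, the bound would fail, so the argument genuinely exploits the min-max form of the objective.
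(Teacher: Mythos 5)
Your proof is correct and follows essentially the same route as the paper's: both rest on the interchange inequality $\DG(\u,\v)\leq\E_z\DG(\u,\v;z)$ (the paper writes it as a single $\max$ over the joint pair $(\u,\v)$ of the difference $M(\u_{\min},\v;z)-M(\u,\v_{\min};z)$, whereas you split it into separate $\max$ and $\min$ interchanges and subtract, which is the same argument), followed by the optimality of $(\u_{\min},\v_{\min})$ and the uniform $\epsilon$ bound from approximate realizability. No gap; your added remark that this step needs no convex-concavity is a fair observation.
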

\begin{proof} Let $\DG(\cdot,\cdot)$ be the duality gap of the original minimax problem. Our goal is to show that $\DG(\u_{\min},\v_{\min}) \leq \epsilon$. Indeed, 
\begin{align*}
\DG(\u_{\min},\v_{\min})&: = \max_{\u,\v}  \E_z [M(\u_{\min},\v;z) -  M(\u,\v_{\min};z) ] \\
&\leq
\E_z \max_{\u,\v}[M(\u_{\min},\v;z) -  M(\u,\v_{\min};z) ] 
:= \E_z \DG(\u_{\min},\v_{\min};z) \leq \E_z \DG(\u^*,\v^*;z)\leq \epsilon
\end{align*}
where we have used the definition of $(\u_{\min},\v_{\min})$ as the minimizer of $\E_z \DG(\cdot,\cdot;z)$, as well as the $\epsilon$-realizability assumption.
\end{proof}
\section{Proof of Lemma~\ref{lem:AdaGrad}}
Before we prove the lemma, we denote $X = U \times V$, and for any 
$(\u,\v) \in U \times V$ we denote,
$$
x: = (\u,\v)~.
$$
Next we rewrite the problem formulation and our assumptions using the above notation.

Our objective can now be written as follows, 
$$
\min_{x\in X} \left[ \DG(x): = \E_{z\sim\mP}\DG(x;z) \right]
$$
We assume that each $\DG(\cdot;z)$'s is convex and $L$-smooth meaning,
$$
\|\nabla \DG(x;z) - \nabla \DG(y;z)\|\leq L\|x-y\|~, \quad \forall x,y\in X, z \in \text{supp}(\mP)
$$
where for any $x= (\u,\v)\in U\times V$ we denote,
$$
\nabla \DG(x;z): = \nabla_{(\u,\v)}\DG(\u,\v;z) = (\nabla_{\u}\DG(\u,\v;z), \nabla_{\v}\DG(\u,\v;z))
$$

We further assume there exists $x^*:= (\u^*,\v^*) \in U\times V$ such,
$$
\nabla \DG(x^*;z) = 0 ~, \quad \forall z \in \text{supp}(\mP)
$$

We analyze the following version of AdaGrad,
$$
x_{t+1} = \Pi_X (x_t - \eta_t g_t)~,\text{where}~~~ \eta_t =\frac{D}{\sqrt{\sum_{\tau =1}^t \|g_\tau\|^2}}~,
$$
where $g_t : = \nabla \DG(x_t;z_t)$ is an unbiased gradient estimate of $\nabla \DG(x_t)$, which uses a sample $z_t\sim\mP$. 
We also assume that $z_1,z_2,\ldots$ are samples i.i.d. from $\mP$.
Note that $\Pi_X(\cdot)$ is the orthogonal projection onto $X : = U\times V$, defined as $\Pi_X(y) =\arg \min_{x\in X} \|y-x\|~, \forall y\in\reals^d$.
Finally, $D$ denotes the diameter of $X$, i.e., $D = \max_{x,y\in X}\| y-x\|$.

Next we restate the lemma using these notations and provide the proof.

\begin{Lem}[Lemma~\ref{lem:AdaGrad}, Restated]\label{lem:AdaGradRephrase}
Consider a stochastic optimization problem of the form $\min_{x\in X}\DG(x) : = \E_{z\sim \mP} \DG(x;z)$. Further assume that $\DG(\cdot;z)$ is $L$-smooth and convex for any $z$ in the support of $\mP$, and that realizabilty assumption holds, i.e., there exists $x^*\in X$ such that $ \nabla \DG(x^*;z) = 0$ for any $z$ in the support of $\mP$. Then applying AdaGrad to this objective ensures an $O(L D^2/T)$ convergence rate where $D: = \max_{x,y\in X}\|x-y\|$ is the diameter of $X$. 
\end{Lem}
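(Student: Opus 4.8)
The plan is to exploit the interplay between two facts: the \emph{self-bounding} gradient inequality for smooth convex functions and the standard data-adaptive regret bound for AdaGrad. The realizability assumption is exactly what links them, since it forces $x^*$ to be a global minimizer of \emph{every} sampled function $\DG(\cdot;z)$ (convexity together with $\nabla\DG(x^*;z)=0$), not merely of the expected objective. This upgrade from ``minimizer in expectation'' to ``minimizer for each $z$'' is what ultimately produces the fast $O(1/T)$ rate rather than $O(1/\sqrt{T})$.

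First I would record the self-bounding property: for an $L$-smooth convex function $f$ whose minimum is attained at $x^*$, one has $\norm{\nabla f(x)}^2 \le 2L\,(f(x)-f(x^*))$ for all $x$ (this follows by applying smoothness to the gradient step from $x$). Applying this to each $f=\DG(\cdot;z_t)$ and using realizability (so that $\DG(x^*;z_t)$ is the minimum value) gives the per-step bound $\norm{g_t}^2 \le 2L\,(\DG(x_t;z_t)-\DG(x^*;z_t))$. By convexity of $\DG(\cdot;z_t)$ the right-hand side is in turn at most $2L\,\langle g_t, x_t-x^*\rangle$, so summing over $t$ yields $S_T \le 2L \sum_{t=1}^T \langle g_t, x_t-x^*\rangle$, where $S_T := \sum_{t=1}^T \norm{g_t}^2$.

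Next I would invoke the textbook (projected) AdaGrad guarantee for the step size $\eta_t = D/\sqrt{\sum_{\tau\le t}\norm{g_\tau}^2}$, namely a deterministic bound $\sum_{t=1}^T \langle g_t, x_t - x^*\rangle \le c\,D\sqrt{S_T}$ with $c$ an absolute constant, where nonexpansiveness of $\Pi_X$ ensures the projection only helps. Combining with the previous display gives $S_T \le 2LcD\sqrt{S_T}$. The crux of the argument — and the step I expect to be the main obstacle to state cleanly — is this self-referential inequality: $S_T$ appears on both sides, and solving it yields $\sqrt{S_T}\le 2LcD$, hence $S_T\le 4L^2c^2D^2$, a bound \emph{independent of $T$}. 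Feeding this back into the regret bound gives $\sum_{t=1}^T \langle g_t,x_t-x^*\rangle \le 2c^2 L D^2$, again $T$-independent.

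Finally I would convert this linearized regret bound into a guarantee on function values. Taking expectations and using that $g_t$ is unbiased for $\nabla\DG(x_t)$ conditioned on the history (with $x_t$ measurable w.r.t.\ that history), convexity gives $\E[\DG(x_t)-\DG(x^*)] \le \E\langle \nabla\DG(x_t), x_t-x^*\rangle = \E\langle g_t, x_t-x^*\rangle$. Summing and applying Jensen's inequality to the averaged iterate $\bar x_T = \frac1T\sum_{t=1}^T x_t$ then produces $\E[\DG(\bar x_T)-\DG(x^*)] \le \frac{1}{T}\E\sum_{t} \langle g_t, x_t - x^*\rangle \le \frac{2c^2 L D^2}{T}$, which is the claimed $O(LD^2/T)$ rate. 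The only subtlety is that the bound on $S_T$ is deterministic (valid on every sample path), so expectations pass through freely despite the randomness hidden inside $\eta_t$.
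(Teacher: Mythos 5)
Your proof is correct, and it rests on the same two pillars as the paper's: the self-bounding inequality $\|\nabla \DG(x;z)\|^2 \le 2L\,(\DG(x;z)-\DG(x^*;z))$ (valid for each $z$ precisely because realizability makes $x^*$ a per-sample minimizer), and the data-adaptive AdaGrad bound, combined by solving a self-referential inequality and finishing with Jensen on the averaged iterate. The difference is where the bootstrapping happens. The paper works in expectation throughout: it invokes the AdaGrad guarantee already in function-value form, $\sum_t \E[\DG(x_t)-\DG(x^*)] \le O(D)\,\E\sqrt{\sum_t\|g_t\|^2}$, pushes the expectation inside the square root by Jensen, applies the self-bounding step in conditional expectation, and resolves the quadratic inequality in the scalar $A_T := \sum_t \E[\DG(x_t)-\DG(x^*)]$. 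You instead resolve everything pathwise: convexity linearizes the per-sample suboptimality into $\langle g_t, x_t-x^*\rangle$, the deterministic regret bound closes the loop $S_T \le 2LcD\sqrt{S_T}$ on every sample path, and expectations enter only at the very end via unbiasedness and the tower property. Your route yields a strictly stronger intermediate statement — an almost-sure, $T$-independent bound on $\sum_t \|g_t\|^2$ and on the linearized regret — and avoids the $\E\sqrt{\cdot}\le\sqrt{\E[\cdot]}$ step entirely, at the cost of one extra use of convexity before the bootstrapping; the paper's route is marginally shorter because the cited AdaGrad bound has convexity and unbiasedness already baked in, and it gives the slightly cleaner constant $4LD^2/T$ versus your $2c^2LD^2/T$. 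Both are valid proofs of the $O(LD^2/T)$ rate, and your observation that the deterministic bound lets expectations pass through despite $\eta_t$ being random is exactly the right point to flag.
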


\begin{proof}[Proof of Lemma~\ref{lem:AdaGradRephrase}]
\textbf{Step 1.}
Our first step is to bound the second moment of the  gradient estimates.
To do so we shall require the following lemma regarding smooth functions (see proof in Appendix~\ref{sec:smallProof}),
\begin{Lem}\label{lemma:SmoothnessGrad}
Let $Q(x)$ be  $L$-smooth function and let $x^*$ be its global minima, i.e. 
$\nabla Q(x^*) =0$.Then the following holds,
$$
\|\nabla Q(x)\|^2   \leq 2L (Q(x) - Q(x^*))
$$
\end{Lem}
Applying the above lemma, and using realizability immediately implies,
\begin{align*}
\E [\|g_t\|^2\vert x_t] 
&= \E_{z_t \sim \mP} [\|\nabla \DG(x_t;z_t)\|^2 \vert x_t]  \\
&\leq
\E_{z_t \sim \mP} 2L [\DG(x_t;z_t) - \DG(x^*;z_t)\vert x_t] \\
&=
2L(\DG(x_t) -\DG(x^*))~.
\end{align*}
And therefore,
\begin{align}\label{eq:Variance}
\E [\|g_t\|^2] 
\leq 
2L\E(\DG(x_t) -\DG(x^*))~.
\end{align}

\textbf{Step 2.}  
Standard analysis of AdaGrad gives the following  bound (see e.g.; \cite{levy2017online}),
\begin{align*}
\sum_{t=1}^T \E[\DG(x_t)-\DG(x^*)] \leq \E \sqrt{\sum_{t=1}^T \|g_t\|^2}
\end{align*}
Using the above together with  Jensen's inequality with respect to the concave function $H(u): = \sqrt{u}~; u\in\reals_{+}$, and together with Eq.~\eqref{eq:Variance} gives,
\begin{align*}
\sum_{t=1}^T\E(\DG(x_t) - \DG(x^*))
& \leq 
D\E\sqrt{2\sum_{t=1}^T \|g_t\|^2} \\
&\leq
D\sqrt{2\sum_{t=1}^T \E\|g_t\|^2} \\
&\leq
D\sqrt{4L\sum_{t=1}^T \E(\DG(x_t) - \DG(x^*))} \\
\end{align*}
Re-arranging the above immediately gives,
$$
\sum_{t=1}^T\E(\DG(x_t) - \DG(x^*)) \leq 4LD^2~.
$$
Taking $\bar{x}_T: = \frac{1}{T}\sum_{t=1}^T x_t$ and using the above together with Jensen's inequality implies,
$$
\E(\DG(\bar{x}_T) - \DG(x^*)) \leq \frac{1}{T}\sum_{t=1}^T\E(\DG(x_t) - \DG(x^*)) \leq \frac{4LD^2}{T}~.
$$
which establishes an $O(1/T)$ rate for this case.

\end{proof}

\subsection{Proof of Lemma~\ref{lemma:SmoothnessGrad}} \label{sec:smallProof}

\begin{proof}
The $L$ smoothness of $Q$ means the following to hold $\forall x,u\in\reals^d$,
$$Q(x+u) \leq Q(x) +\nabla Q(x)^\top u+\frac{L}{2}\|u\|^2 ~.$$
Taking  $u=-\frac{1}{L}\nabla Q(x)$ we get,
$$Q(x+u) \le Q(x) -\frac{1}{L}\|\nabla Q(x)\|^2+\frac{1}{2L}\|\nabla Q(x)\|^2~.$$
Thus:
\begin{align*}
\|\nabla Q(x)\|^2 &\le 2L \big( Q(x) -Q(x+u)\big)\\
&  \le  2L \big(Q(x) -Q(x^*)\big)~,
\end{align*}
where in the last inequality we used $Q(x^*) \leq Q(x+u)$ which holds since $x^*$ is the \emph{global} minimum.
\end{proof}

\section{Bilinear Games}
\label{app:bilinear_games}
Throughout the paper we use several bilinear toy games. Concretely, Fig.~\ref{fig:motivation} is created using the function:
\begin{align*}
    f(x, y) = x^2  - y^2  + xy + 10 \sin(5x)+12*\sin(3y);
\end{align*}
for $x, y \in [-10, 10]$. \\
The functions used for Fig.~\ref{fig:ccandncc_toy} and Fig.~\ref{app:ccandncc_toy} are:
\begin{itemize}
    \item[] Convex-concave:
    \begin{align}
    \label{eq:cc_app}
        f(x, y) = cxy, c \in \{3, 10\}
    \end{align}
        \item[] Nonconvex-nonconcave:
    \begin{align}
        f(x, y) = F(x) + cxy - F(x), where \\
        F(x)= 
\begin{cases}
    -3(x + \frac{\pi}{2}),& \text{if } x< -\frac{\pi}{2}\\
    -3\cos(x),              & \text{if } -\frac{\pi}{2} \leq x\leq \frac{\pi}{2}\\
    -\cos(x) + 2x - \pi,& \text{if } x> \frac{\pi}{2}
\end{cases}
    \end{align}
\end{itemize}
as suggested in~\cite{abernethy2019last}.

Fig. \ref{fig:ncnc_toy_dist} also shows the behavior of the algorithm for different values of $k$. 
\begin{wrapfigure}{r}{5cm}
  \vspace{-0.3cm}
  \begin{center}
  
    \includegraphics[width=0.32\textwidth]{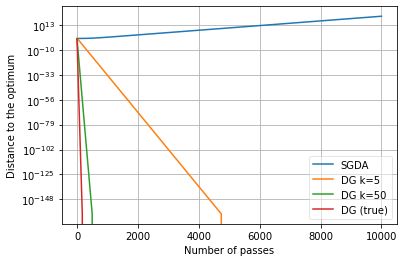}
  \end{center}
  %\caption{\label{fig:ncnc_toy_dist} \small{Distance to  solution for different $k$.} }
    \caption{\label{fig:ncnc_toy_dist} \small{Distance to the solution for different values of $k$.} }
  \vspace{-0.5cm}
\end{wrapfigure}

% \begin{figure}
% \centering
% \begin{subfigure}{.3\textwidth}
%   \centering
%   \includegraphics[width=1\textwidth]{ICML_2021/figures/toy_sgda_vs_dg/cc_c3_toy.png}
%   \caption{c=3}
%   \label{fig:sub1}
% \end{subfigure}%
% \begin{subfigure}{.3\textwidth}
%   \centering
%   \includegraphics[width=1\textwidth]{ICML_2021/figures/toy_sgda_vs_dg/cc_c10_toy.png}
%   \caption{c=10}
%   \label{fig:sub2}
% \end{subfigure}
% \begin{subfigure}{.3\textwidth}
%   \centering
%   \includegraphics[width=1\textwidth]{ICML_2021/figures/toy_sgda_vs_dg/cc_c10_toy.png}
%   \caption{c=10}
%   \label{fig:sub2}
% \end{subfigure}
% \caption{a) and b) Convex-concave game: DG (and its approximations) converge, whereas GDA either oscillates or converges very slowly. c) Nonconvex-nonconcave game: DG still converges.}
% \label{fig:ccandncc_toy}
% \vspace{-5mm}
% \end{figure}

In Fig.~\ref{fig:ccandncc_toy} we empirically demonstrate that using DG as an objective in a convex-concave setting (Fig.~\ref{fig:ccandncc_toy} (a) and b)), and a simple nonconvex-nonconcave setting (Fig.~\ref{app:ccandncc_toy} (c)) yields convergence to the solution of the game.

\begin{figure}
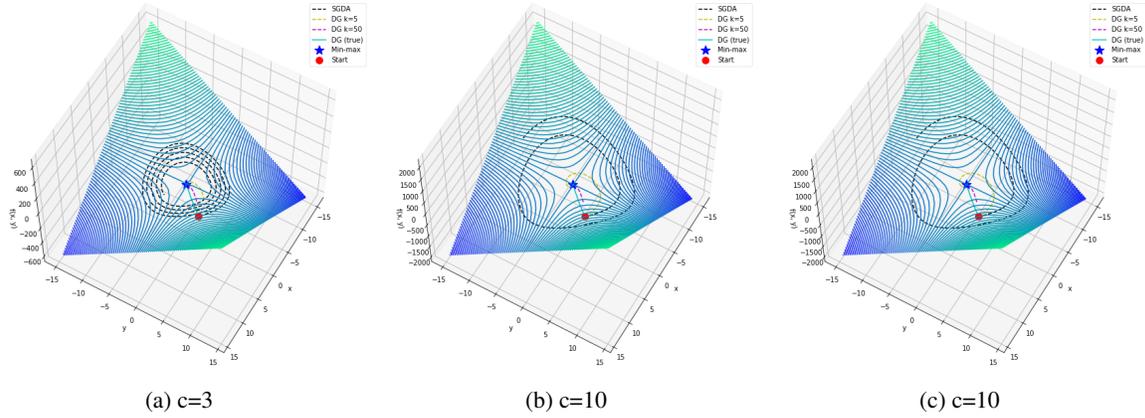

\centering
\begin{subfigure}{.3\textwidth}
  \centering
  \includegraphics[width=1\textwidth]{ICML_2021/figures/toy_sgda_vs_dg/cc_c3_toy.png}
  \caption{c=3}
  \label{fig:sub1}
\end{subfigure}%
\begin{subfigure}{.3\textwidth}
  \centering
  \includegraphics[width=1\textwidth]{ICML_2021/figures/toy_sgda_vs_dg/cc_c10_toy.png}
  \caption{c=10}
  \label{fig:sub2}
\end{subfigure}
\begin{subfigure}{.3\textwidth}
  \centering
  \includegraphics[width=1\textwidth]{ICML_2021/figures/toy_sgda_vs_dg/cc_c10_toy.png}
  \caption{c=10}
  \label{fig:sub2}
\end{subfigure}
\caption{a) and b) Convex-concave game: DG (and its approximations) converge, whereas GDA either oscillates or converges very slowly. c) Nonconvex-nonconcave game: DG still converges.}
\label{app:ccandncc_toy}
\vspace{-5mm}
\end{figure}

\paragraph{Analysing the updates.}
The SGD updates for the game Equation~\ref{eq:cc_app} are:
\begin{align*}
        [
   \begin{array}{l}
    x_{t+1} \\
    y_{t+1}
  \end{array}
] = [
   \begin{array}{l}
    x_{t} \\
    y_{t}
  \end{array}
] - \eta [
   \begin{array}{l}
    \nabla_x f(x_t, y_t) \\
    -\nabla_y f(x_t, y_t)
  \end{array}
] = \\
[
   \begin{array}{l}
    x_{t} \\
    y_{t}
  \end{array}
] - [
   \begin{array}{l}
    \eta y_{t} c \\
    -\eta x_{t} c
  \end{array}
].
    \end{align*}
Concretely, 
\begin{align*}
        [
   \begin{array}{l}
    x_{t+1} \\
    y_{t+1}
  \end{array}
] = [
   \begin{array}{l}
    1 \quad -\eta c \\
    \eta c \quad \quad 1
  \end{array}
] [
   \begin{array}{l}
    x_t \\
     y_t
  \end{array}
].
\end{align*}
The eigenvalues of the system are $\lambda_1 = 1-i \cdot \eta c$ and $\lambda_1=1+i*\eta*c$, which leads either to oscillations or divergent behavior depending on the value of c ~\citep{mescheder2018training}. 

The duality gap function can be defined as $dg(x, y) = c(x \cdot y_w - x_w \cdot y)$ and the corresponding updates are:
\begin{align*}
        [
   \begin{array}{l}
    x_{t+1} \\
    y_{t+1}
  \end{array}
] = [
   \begin{array}{l}
    x_{t} \\
    y_{t}
  \end{array}
] - \eta [
   \begin{array}{l}
    \nabla_x dg(x_t, y_w) \\
    -\nabla_y dg(x_w, y_t)
  \end{array}
] = \\
[
   \begin{array}{l}
    x_{t} \\
    y_{t}
  \end{array}
] - [
   \begin{array}{l}
    \eta y_{w} \cdot c \\
    -\eta x_{w} \cdot c
  \end{array}
], \text{where }
    \end{align*}
\begin{align*}
        [
   \begin{array}{l}
    x_{w} \\
    y_{w}
  \end{array}
] = [
   \begin{array}{l}
    x_{t} \\
    y_{t}
  \end{array}
]  - [
   \begin{array}{l}
    \eta y_{t} c k \\
    -\eta x_{t} c k
  \end{array}
].
    \end{align*}
Finally, the two combined lead to updates of the form:
\begin{align*}
   \begin{bmatrix}
    x_{t+1} \\
    y_{t+1}
  \end{bmatrix}
=
   \begin{bmatrix}
    x_{t} - \eta y_t -\eta^2 k c x_t \\
    y_{t} + \eta x_t -\eta^2 k c y_t
  \end{bmatrix} \\
=
   \begin{bmatrix}
    1 -\eta^2 k c & - \eta \\
    \eta & 1 - \eta^2 k c
  \end{bmatrix}
  \begin{bmatrix}
    x_{t} \\
    y_{t}
  \end{bmatrix}
\end{align*}
The algorithm converges for all $\eta^2 <= \frac{2kc}{k^2c^2}$. As can be seen, when optimizing the practical DG there is an additional term that appears in the updates that "contracts" everything and leads to convergence, even for $k=1$.

To study stability, we need to find the eigenvalues of the above matrix which we denote by $\Am$. We set 
\begin{align}
|\Am - \lambda \Im| = \left|
  \begin{bmatrix}
    1 -\eta^2 k c - \lambda & - \eta \\
    \eta & 1 - \eta^2 k c - \lambda
  \end{bmatrix} \right| = 0
\end{align}
% eigenvalues {(1-x*a^2*b, a), (-a, 1-x*a^2*b)}
The eigenvalues are
\begin{equation}
\lambda_{1,2} = - \eta^2 k c + 1 \pm i \eta
\end{equation}

% https://math.libretexts.org/Bookshelves/Applied_Mathematics/Book%3A_Introduction_to_the_Modeling_and_Analysis_of_Complex_Systems_(Sayama)/05%3A_DiscreteTime_Models_II__Analysis/5.06%3A_Asymptotic_Behavior_of_Discrete-Time_Linear_Dynamical_Systems
In order to get stability, we need the modulus of $\lambda_i < 1$, i.e.
\begin{align}
\sqrt{(- \eta^2 k c + 1)^2 + \eta^2} < 1,
\end{align}
which holds for $\eta < \frac{2kc - 1}{k^2 c^2}$.

\section{Experiments}
\label{app:experiments}
In the following we give experimental details such as hyperparameters, architectures and additional baselines.

\subsection{Toy problems}
The three low dimensional problems we consider are:
\begin{align*}
    f_1(x, y) &= -3x^2 -y^2 +4xy  & \\
    f_2(x, y) &= 3x^2 + y^2 + 4xy & \\
    f_3(x, y) &= (4x^2 - (y - 3x + 0.05x^3)^2 -0.1y^4)e^{-0.01(x^2 + y^2)}.
\end{align*}
as suggested in~\citep{wang2019solving}.
The algorithms we compare with are:
\begin{itemize}
    \item[(i)] Gradient Descent Ascent (GDA)
    \begin{align*}
        [
   \begin{array}{l}
    x_{t+1} \\
    y_{t+1}
  \end{array}
] = [
   \begin{array}{l}
    x_{t} \\
    y_{t}
  \end{array}
] - \eta [
   \begin{array}{l}
    \nabla_x f(x_t, y_t) \\
    -\nabla_y f(x_t, y_t)
  \end{array}
]
    \end{align*}
    
        \item[(ii)] Optimistic Gradient Descent Ascent (OGDA) ~\citep{daskalakis2017training}
    \begin{align*}
        [
   \begin{array}{l}
    x_{t+1} \\
    y_{t+1}
  \end{array}
] = [
   \begin{array}{l}
    x_{t} \\
    y_{t}
  \end{array}
] - 2\eta [
   \begin{array}{l}
    \nabla_x f(x_t, y_t) \\
    -\nabla_y f(x_t, y_t)
  \end{array}
] + \eta [
   \begin{array}{l}
    \nabla_x f(x_{t-1}, y_{t-1}) \\
    -\nabla_y f(x_{t-1}, y_{t-1})
  \end{array}
]
    \end{align*}
            \item[(iii)] Extragradient (EG) ~\citep{korpelevich1976extragradient}
    \begin{align*}
        [
   \begin{array}{l}
    x_{t+1} \\
    y_{t+1}
  \end{array}
] = [
   \begin{array}{l}
    x_{t} \\
    y_{t}
  \end{array}
] - \eta [
   \begin{array}{l}
    \nabla_x f(x_t - \eta \nabla_x f(x_t, y_t), y_t + \eta \nabla_y f(x_t, y_t)) \\
    -\nabla_y f(x_t - \eta \nabla_x f(x_t, y_t), y_t + \eta \nabla_y f(x_t, y_t))
  \end{array}
] 
    \end{align*}
    
                \item[(iv)] Symplectic Gradient Adjustment (SGA) ~\citep{balduzzi2018mechanics}
    \begin{align*}
        [
   \begin{array}{l}
    x_{t+1} \\
    y_{t+1}
  \end{array}
] = [
   \begin{array}{l}
    x_{t} \\
    y_{t}
  \end{array}
] - \eta [
   \begin{array}{l}
   \mathbf{I}\quad -\lambda \mathbf{H}_{xy} \\
    \lambda \mathbf{H}_{yx}\quad  \mathbf{I}
  \end{array}
] [
   \begin{array}{l}
    \nabla_x f(x_t, y_t) \\
    -\nabla_y f(x_t, y_t)
  \end{array}
]
    \end{align*}
                    \item[(v)] Concensus Optimization (CO) ~\citep{mescheder2018training}
    \begin{align*}
        [
   \begin{array}{l}
    x_{t+1} \\
    y_{t+1}
  \end{array}
] = [
   \begin{array}{l}
    x_{t} \\
    y_{t}
  \end{array}
] - \eta [
   \begin{array}{l}
    \nabla_x f(x_t, y_t) \\
    -\nabla_y f(x_t, y_t)
  \end{array}
] - \gamma\eta\nabla||\nabla f(x_t,y_t)||^2
    \end{align*}
                        \item[(vi)] Unrolled SGDA ~\citep{metz2016unrolled}
    \begin{align*}
        [
   \begin{array}{l}
    x_{t+1} \\
    y_{t+1}
  \end{array}
] = [
   \begin{array}{l}
    x_{t} \\
    y_{t}
  \end{array}
] - \eta [
   \begin{array}{l}
    (\nabla_x f(x_t, y_k) + \nabla_x y_k)\\
    -(\nabla_y f(x_k, y_t) + \nabla_y x_k)
  \end{array}
] 
    \end{align*}
                            \item[(vii)] Follow-the-Ridge (FR) ~\citep{wang2019solving}
    \begin{align*}
        [
   \begin{array}{l}
    x_{t+1} \\
    y_{t+1}
  \end{array}
] = [
   \begin{array}{l}
    x_{t} \\
    y_{t}
  \end{array}
] - \eta_x [
   \begin{array}{l}
    \mathbf{I}\\
    -\mathbf{H_{yy}^{-1}\mathbf{H}_{yx}} \quad c\mathbf{I}
  \end{array}
] [
   \begin{array}{l}
    \nabla_x f(x_t, y_t)\\
    -\nabla_y f(x_t, y_t)
  \end{array}
], c=\frac{\eta_y}{\eta_x}
    \end{align*}
\end{itemize}
The learning rate for all algorithms is set to of $\eta = 0.05$. In addition, $ \lambda = 1.0$ for SGA and $\gamma = 0.1$ for CO.

\subsubsection{Analysing the updates}
\label{app:updates_toy}

We would like to show that the practical DG algorithm enjoys the same benefits as the theoretical DG. In particular, we would like to show that (i) DG converges to stable fixed points that are desirable and (ii) DG does not converge to unstable fixed points in comparison to SGDA and related methods. \\
To this end we analyse the updates for the toy problems in Fig. 4:
\begin{align*}
    f_1 = -3x^2 - y^2 + 4xy \\
    f_2 = 3x^2 + y^2 + 4xy,
\end{align*}
where x is the $\min$ player and y is the $\max$ player. \\

The two functions are two-dimensional quadratic problems, which are arguably the simplest nontrivial problems. In $f_1$, (0, 0) is a local (and in fact global) minimax; yet we see experimentally that only DG, FR, SGA and CO converge to it; all other method diverge (the trajectory of OGDA and EG almost overlaps). We show this also through analytical analysis by examining the eigenvalues of the Jacobian of the dynamical system.
\paragraph{Updates.}
The GDA updates for the game are:
\begin{align*}
        [
   \begin{array}{l}
    x_{t+1} \\
    y_{t+1}
  \end{array}
] = [
   \begin{array}{l}
    x_{t} \\
    y_{t}
  \end{array}
] - \eta [
   \begin{array}{l}
    \nabla_x f(x_t, y_t) \\
    -\nabla_y f(x_t, y_t)
  \end{array}
] = \\
[
   \begin{array}{l}
    x_{t} \\
    y_{t}
  \end{array}
] - [
   \begin{array}{l}
    \eta (-6x_{t} + 4y_{t}) \\
    -\eta (4x_{t} - 2y_{t})
  \end{array}
]
    \end{align*}

Concretely, 
\begin{align*}
        [
   \begin{array}{l}
    x_{t+1} \\
    y_{t+1}
  \end{array}
] = [
   \begin{array}{l}
    6\eta + 1 \quad -4\eta \\
    4\eta \quad \quad -2\eta + 1
  \end{array}
] [
   \begin{array}{l}
    x_t \\
     y_t
  \end{array}
].
\end{align*}
The eigenvalues of the system are of the form $\lambda=2\eta + 1$, which is greater than 1 since the learning rate $\eta$ is positive. Thus GDA never converges to the desired solution. The main reason behind the divergence of GDA is that the gradient of the leader pushes the system away from the local minimax when it is a local maximum for the leader (as explained in \cite{wang2019solving}). \\
Next we analyse the convergence of DG for the corresponding function. Concretely,
\begin{align*}
    dg_{f_1} = -3x^2 - y_w^2 + 4xy_w - (-3x_w^2 - y^2 + 4x_wy),\text{ where } \\
    x_w = x - \eta(-6x + 4y) \\
    y_w = y + \eta(4x - 2y),
\end{align*}
where both players are \textbf{minimizing} the same objective, i.e. there is no $\min$ and $\max$ players. \\
The Hessian is positive definite at the solution, hence (0, 0) is a local min.
The DG updates are:
\begin{align*}
        [
   \begin{array}{l}
    x_{t+1} \\
    y_{t+1}
  \end{array}
] = [
   \begin{array}{l}
    x_{t} \\
    y_{t}
  \end{array}
] - \eta [
   \begin{array}{l}
    \nabla_x dg_{f_1}(x_t, y_t) \\
    \nabla_y dg_{f_1}(x_t, y_t)
  \end{array}
] = \\
[
   \begin{array}{l}
    x_{t} \\
    y_{t}
  \end{array}
] - \eta [
   \begin{array}{l}
    8\eta((23\eta + 13)x - 8(2\eta y + y)) \\
    8\eta((11\eta + 5)y - 8(2\eta + 1)x)
  \end{array}
]
\end{align*}
Concretely, 
\begin{align*}
        [
   \begin{array}{l}
    x_{t+1} \\
    y_{t+1}
  \end{array}
] = [
   \begin{array}{l}
    1 - 8\eta^2(23\eta + 13) \quad 64\eta^2(2\eta+1) \\
    64\eta^2(2\eta + 1) \quad 1 - 8\eta^2(11\eta+5)
  \end{array}
] [
   \begin{array}{l}
    x_t \\
     y_t
  \end{array}
].
\end{align*}
For the above dynamical system, the eigenvalues of the Jacobian are smaller than 1, for all values of $0<\eta \leq 1$ and the system converges to the local and global minimum which is the desired solution.

\paragraph{Intuition.} Intuitively, the reason why the GDA dynamics exhibit undesirable convergence in most settings can be understood from the gradient updates of the max player. The steps taken by the max player push the dynamics away from the ridge (the region where the gradients with respect to the max player are zero~\cite{wang2019solving}). We show in Appendix~\ref{app:bilinear_games} the DG updates include a correction term that cancels the negative effects due to the max player's update. In particular, Fig.~\ref{fig:intuition} shows that the DG updates are driven towards the ridge and once there, it moves along the ridge until convergence to the solution. FR ~\citep{wang2019solving} is the only other algorithm that seems to behave similarly. However, FR relies on second-order information and currently exists only for the full-batch version.

\begin{wrapfigure}{r}{4.7cm}
  %\vspace{-1.6cm}
  \begin{center}
    \includegraphics[width=0.6\linewidth]{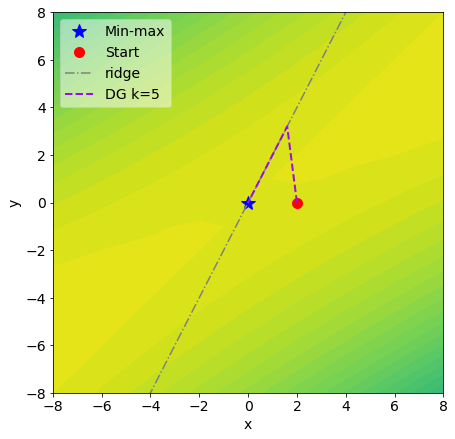}
  \end{center}
  \caption{DG first moves towards the ridge and then along it.}
    \label{fig:intuition}
  \vspace{-0.4cm}
\end{wrapfigure}

In $f_2$, where (0, 0) is not
a local minimax, all algorithms except for DG and FR converge to this undesired stationary point. In this case, the leader is still to blame for the undesirable convergence of GDA (and other variants) since it gets trapped by the gradient pointing to the origin. 

\paragraph{Analysing the updates.}
The GDA updates for the game are:
\begin{align*}
        [
   \begin{array}{l}
    x_{t+1} \\
    y_{t+1}
  \end{array}
] = [
   \begin{array}{l}
    x_{t} \\
    y_{t}
  \end{array}
] - \eta [
   \begin{array}{l}
    \nabla_x f(x_t, y_t) \\
    -\nabla_y f(x_t, y_t)
  \end{array}
] = \\
[
   \begin{array}{l}
    x_{t} \\
    y_{t}
  \end{array}
] - [
   \begin{array}{l}
    \eta (6x_{t} + 4y_{t}) \\
    -\eta (4x_{t} + 2y_{t})
  \end{array}
]
    \end{align*}
Concretely, 
\begin{align*}
        [
   \begin{array}{l}
    x_{t+1} \\
    y_{t+1}
  \end{array}
] = [
   \begin{array}{l}
    1 - 6\eta   \quad -4\eta \\
    4\eta \quad \quad 2\eta + 1
  \end{array}
] [
   \begin{array}{l}
    x_t \\
     y_t
  \end{array}
].
\end{align*}
The eigenvalues of the system are of the form $\lambda=1 - 2\eta$, which is smaller than 1 since the learning rate $\eta$ is positive. Therefore, the GDA update converge to the undesirable point.\\
Now let's see what happens when optimizing DG instead:

\begin{align*}
    dg_{f_2} = 3x^2 + y_w^2 + 4xy_w - (3x_w^2 + y^2 + 4x_wy),\text{ where } \\
    x_w = x - \eta(6x + 4y) \\
    y_w = y + \eta(4x + 2y),
\end{align*}
where both players are \textbf{minimizing} the same objective, i.e. there is no $\min$ and $\max$ players. \\
The Hessian is neither positive definite nor negative definite at the (0, 0). In fact, (0, 0) is a saddle point for the DG objective.
The DG updates are:
\begin{align*}
        [
   \begin{array}{l}
    x_{t+1} \\
    y_{t+1}
  \end{array}
] = [
   \begin{array}{l}
    x_{t} \\
    y_{t}
  \end{array}
] - \eta [
   \begin{array}{l}
    \nabla_x dg_{f_2}(x_t, y_t) \\
    \nabla_y dg_{f_2}(x_t, y_t)
  \end{array}
] = \\
[
   \begin{array}{l}
    x_{t} \\
    y_{t}
  \end{array}
] - \eta [
   \begin{array}{l}
    -8\eta((23\eta - 13)x + 8(2\eta y - y)) \\
    -8\eta((11\eta - 5)y + 8(2\eta - 1)x)
  \end{array}
]
\end{align*}
Concretely, 
\begin{align*}
        [
   \begin{array}{l}
    x_{t+1} \\
    y_{t+1}
  \end{array}
] = [
   \begin{array}{l}
    1 + 8\eta^2(23\eta - 13) \quad  64\eta^2(2\eta-1) \\
    64\eta^2(2\eta - 1) \quad 1 + 8\eta^2(11\eta - 5)
  \end{array}
] [
   \begin{array}{l}
    x_t \\
     y_t
  \end{array}
].
\end{align*}
For the above dynamical system, the eigenvalues of the Jacobian are larger than 1, for all values of $\eta>\frac{1}{3}$.

\subsection{Mixture of Gaussians}
\paragraph{Dataset.} We follow the setup proposed in~\cite{wang2019solving} as they show it is particularly difficult for most standard training methods. The dataset is a mixture of Gaussians composed of 5000 points sampled independently from a distribution $p_D(x) = \frac{1}{3} \mathcal{N} (-4, 0.01) + \frac{1}{3} \mathcal{N} (0, 0.01) + \frac{1}{3} \mathcal{N} (4, 0.01)$, where $\mathcal{N}{\mu, \sigma^2}$ is a one-dimensional Gaussian distribution with mean $\mu$ and variance $\sigma$. Note that our mixture has three 1-dimensional Gaussians centered at -4, 0 and 4. The setup we consider is a full-batch setting, so we sample and re-use the same 5000 points at each iteration. 

\paragraph{Architecture.} In terms of architecture, the generator and discriminator are two hidden layer neural networks with 64 hidden units and tanh activations. The noise vector, $z$, is a 16-dimensional vector sampled from a standard Gaussian distribution.

\paragraph{Hyperparameters.} For all methods, the learning rate for both the generator and discriminator is set to 0.0002. For consensus optimization (CO), 
the gradient penalty coefficient is tuned using grid search over {0.01, 0.03, 0.1, 0.3, 1.0, 3.0, 10.0}. For DG we choose k in {5, 10, 25, 50}.

\subsection{Optimization landscape}
\paragraph{Dataset.} We use the MNIST training set that consists of 50K images, all scaled to be in the range [-1, 1].

\paragraph{Architecture.} We follow the exact setup proposed in~\cite{berard2019closer} and use the DCGAN architecture \cite{radford2015unsupervised} for all objectives (GAN (with the non-saturating loss), WGAN, WGAN-GP, GAN ExtraAdam etc.). The batch-norm layers are replaced with spectral-norm layers~\citep{miyato2018spectral}. 

\paragraph{Hyperaparemeters.} The learning rates for all GAN-based models are $2*10^{-4}$ and $5*10^{-5}$ for G and D, respectively, with $\beta_1=0.9$ where applicable. For the WGAN-based models, the learning rates are set to $8.6*10^{-5}$, with $\beta_1=0.5$ and 5 critic updates in the inner optimization. For WGAN-GP, $\lambda=10$ for the gradient penalty. The batch size for all models is 100. We report the best Inception Score (IS) for 200000 training iterations and report average over 5 rounds. IS is computed using a LeNet classifier pretrained on MNIST with average IS score of 9.9 on real MNIST data.

\paragraph{Additional results.}
In addition to the improved IS scores, we generally find the training with DG to be more stable and convergent (Fig.~\ref{fig:gans_vs_dggans}). We follow the setup from~\cite{berard2019closer} and in Fig.\ref{fig:game_dynamics} further report (i) the eigenvalues of the game Jacobian: As shown in~\cite{berard2019closer} when some eigenvalues have large imaginary parts, then the game has a strong rotational behavior, (ii) the top-k eigenvalues of the diagonal blocks of the Jacobian, which correspond to the Hessian of each player and are informative as to whether the game has reached a (local) NE, (iii) path-norm i.e. the norm of the vector field, which should be low around a stationary point and (iv) path-angle: the angle between the vector field and the linear path from the final to the initial point, which is known to show a "bump" when there is a rotational component to the game. (i)-(iv) combined give more insight into the optimization landscape - in particular as to whether the game is close to a stationary point, whether the point is a local NE and whether there are (strong) rotational dynamics. As shown in Fig.\ref{fig:game_dynamics}, when training a GAN, there is a non-zero rotational component. This can be observed by both the large imaginary part in the eigenvalues of the Jacobian and the path-angle exhibiting a bump, which is not present when training a DG GAN. Moreover, the generator of GANs consistently never reaches a local minimum but instead finds a saddle point. In contrast, the eigenvalues of the Hessians of both players when minimizing DG have no negative values, indicating the convergence to a local NE. Overall, both DG GAN and DG WGAN exhibit a more stable behavior, which is also reflected by improved IS scores.

\paragraph{Additional baselines.} Table~\ref{tab:is_score_app} shows Inception Scores (IS) for various models, averaged over 5 runs with different seeds. We compare GAN and WGAN-based models with different variations. DG gives better scores for both (GAN and WGAN) objectives. We find that DG when using the ALI architecture (which is consistent with our theory) gives additional improvement over DG without the ALI architecture. Finally, we also include results for a GAN trained by performing $k$ optimization steps in the inner (max) or outer loop (min), or both. Overall, DG consistently outperforms the alternatives.

\paragraph{Correlation of IS and DG.}
In Fig.~\ref{fig:app_is_wgans} and ~\ref{fig:app_dg_wgans} we plot the progression of the Inception Score and Duality Gap over epochs. DG shows higher IS and lower DG throughout the entire training. 

\begin{table}[]
    \centering

\begin{tabular}{@{}lc|cc@{}}
\toprule
\multicolumn{2}{c}{\textbf{GAN based}}                 & \multicolumn{2}{c}{\textbf{WGAN based}}       \\ \midrule
\multicolumn{1}{c}{\textbf{Model}} & \textbf{IS}       & \textbf{Model}              & \textbf{IS}      \\
GAN (Adam)                         & 8.58 $\pm$ 0.006  & \multicolumn{1}{l}{WGAN}    & 7.41 $\pm$ 0.029 \\
GAN (ExtraAdam)                    & 8.80 $\pm$ 0.021  &  
\multicolumn{1}{l}{WGAN (ExtraAdam)} & 7.61 $\pm$ 0.014 \\
GAN (SGD)                          & 8.19 $\pm$ 0.017  &                             &                  \\
GAN (RMSProp)                      & 8.69 $\pm$ 0.013  &                             &  \\                 
BiGAN/ALI                          & 8.65 $\pm$ 0.081  &
\multicolumn{1}{l}{WGAN GP} & 9.28 $\pm$ 0.004 \\

\bottomrule
DG (k=5)                                & 9.21 $\pm$ 0.011 & \multicolumn{1}{l}{DG WGAN (k=5)} & 8.31 $\pm$ 0.101 \\
DG   (k=10)                              & \textbf{9.26 $\pm$ 0.021} & \multicolumn{1}{l}{DG WGAN (k=10)} & 9.41 $\pm$ 0.112 \\
DG   (k=25)                              & 9.24 $\pm$ 0.015 & \multicolumn{1}{l}{DG WGAN (k=25)} & \textbf{9.59 $\pm$ 0.014} \\
\bottomrule
DG (without ALI) (k=5)                    & 9.06 $\pm$ 0.006  &\\
DG (without ALI) (k=10)                     & 9.02 $\pm$ 0.013  &\\
DG (without ALI) (k=25)            & 9.01 $\pm$ 0.011 \\
\bottomrule
GAN (Adam) (d=10)                     & 6.92 $\pm$ 0.001  &\\
GAN (Adam) (g=10)                     & 1.015 $\pm$ 0.010  &\\
GAN (Adam) (d=10 and g=10)            & 8.90 $\pm$ 0.012 \\
\bottomrule
\end{tabular} 
      \captionof{table}{\label{tab:is_score_app} Inception score (IS) for various models averaged over 5 runs with different seeds. The average IS score of real MNIST data is 9.9. \\
      The first row is various GAN and WGAN based methods. The second and third rows give results for DG with different number of k optimization steps (the third row is applying the DG objective without the ALI setting). The last row corresponds to training a GAN when the maximization step (d), minimization step (g), or both (d and g) are performed k times.}
\end{table}

\begin{figure}[!htb]
    \centering
    \begin{minipage}{.45\textwidth}
        \centering
        \includegraphics[width=1\linewidth, height=0.2\textheight]{ICML_2021/figures/appendix/Appendix_IS_WGANs.pdf}
        \caption{Inception Score (IS) over epochs.}
        \label{fig:app_is_wgans}
    \end{minipage}%
    \begin{minipage}{0.45\textwidth}
        \centering
        \includegraphics[width=1\linewidth, height=0.2\textheight]{ICML_2021/figures/appendix/Appendix_DG_WGANs.pdf}
        \caption{Duality Gap (DG) over epochs.}
        \label{fig:app_dg_wgans}
    \end{minipage}
\end{figure}

\begin{figure}[htbp]
\captionsetup[subfigure]{labelformat=empty}
\centering
\raisebox{1.5cm}{\rotatebox[origin=c]{90}{GAN}}\quad
\subfloat[]{\includegraphics[width=0.23\textwidth,valign=c]{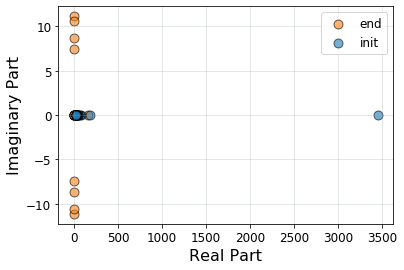}}\hfill
\subfloat[]{\includegraphics[width=0.23\textwidth,valign=c]{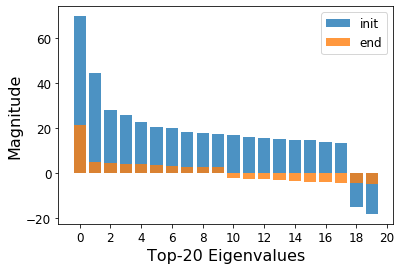}}\hfill
\subfloat[]{\includegraphics[width=0.23\textwidth,valign=c]{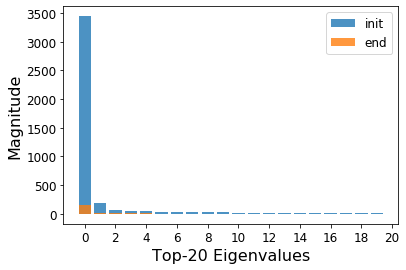}}
\subfloat[]{\includegraphics[width=0.23\textwidth,valign=c]{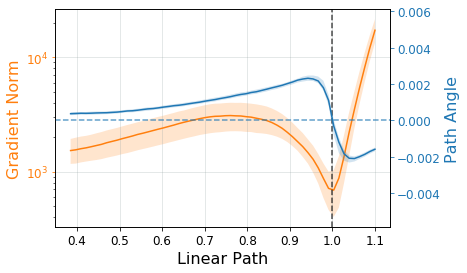}}

\raisebox{1.5cm}{\rotatebox[origin=c]{90}{DG}}\quad
\subfloat[]{\includegraphics[width=0.23\textwidth,valign=c]{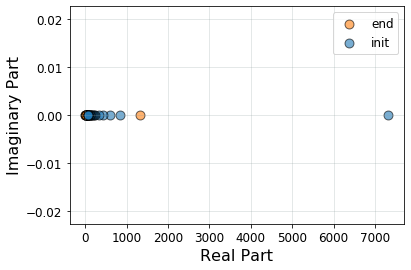}}\hfill
\subfloat[]{\includegraphics[width=0.23\textwidth,valign=c]{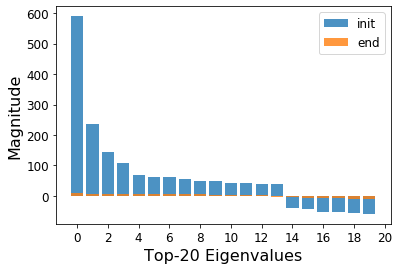}}\hfill
\subfloat[]{\includegraphics[width=0.23\textwidth,valign=c]{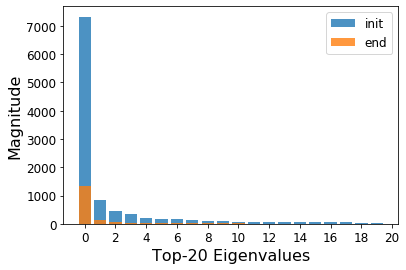}}
\subfloat[]{\includegraphics[width=0.23\textwidth,valign=c]{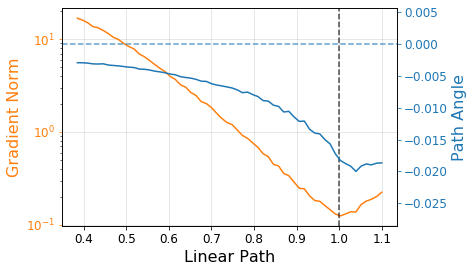}}

\caption{\label{fig:game_dynamics} Optimization landscape for GAN and DG GAN. First column: Eigenvalues of the game at the end point. Second and third column: Top k-Eigenvalues of the Hessian of each player (first G, then D) in
descending order. Fourth column: Path-angles between a random initialization (initial point) and the set of parameters during training achieving the best performance (end
points). }
\end{figure}

\begin{table}[]
\centering
\begin{tabular}{lcc}
\toprule
\textbf{k}         & \textbf{Duality Gap} & \textbf{FID} \\ \hline
\textbf{1}  & (diverges) & (diverges)\\
\textbf{3}  & 1.74 & 23.02\\
\textbf{5} & 0.37 & 22.46 \\
\textbf{10} & 0.24 & 22.14 \\
\bottomrule
\end{tabular}
\caption{\label{tab:fiddifferentk} Duality Gap (DG) and FID for different optimization steps $k$ at the end of training.}
\end{table}

\subsection{Experiments with BigBiGAN}
\paragraph{Dataset.} We use the training sets for MNIST, Fashion-MNIST and CIFAR10 in a conditional and unconditional setting. All datasets have 10 classes. The input images for all datasets are scaled to 32x32 dimensions and are normalized to be within the [0, 1] range.

\paragraph{Architecture.} We use a low-resolution (32 x 32) implementation of BigBiGAN ~\cite{donahue2019large}. The generator, G, and the discriminator unit F have 3 residual blocks each. The discriminator units H and J are 6-layer MLPs with 50 units and skip-connections. The encoder architecture, as suggested in ~\cite{donahue2019large} consists of a higher resolution input (64x64) and a RevNet (Reversible Residual Network) with 13 layers, followed by a 4-layer MLP with units of size 256.

\paragraph{Hyperparameters.} The learning rates for all models are set to $2e-4$ with $\beta_1=0.5$. The batch size is 256 and all models are trained for 50 epochs. We report FID ~\citep{heusel2017gans} for 10,000 generated samples. For DG we set $k=10$ and for the DG metric we report, we used the test sets for each of the datasets respectively. In addition, in Fig.~\ref{fig:bigbigan} we report IS and FID using a pretrained MNIST classifier.

\paragraph{The effect of $k$.} In Tab.~\ref{tab:fiddifferentk} we report the Duality Gap value (DG) and FID at the end of training for different optimization steps $k$. When $k=1$, the training is unstable and eventually diverges. As $k$ increases the training becomes more stable. Overall, we find the optimal value of $k$ to be 10, with no significant difference when further increasing $k$. This is consistent across different experiments (see e.g. Tab.~\ref{tab:is_score_app} and Fig.~\ref{fig:app_is_wgans} and ~\ref{fig:app_dg_wgans}).

Note that DG can be computed for any GAN formulation. When the GAN formulation is the saturating (original) GAN formulation and $k$ goes to infinity, there might be instabilities. To avoid this, one can use the same tools that are used throughout the literature (adding regularizers, batch norms, gradient penalties or simply optimizing for fewer steps)~\citep{schafer2019implicit}. For more thorough analysis of the effect of k on the DG value we refer the reader to~\cite{grnarova2019domain}.

\paragraph{Additional benefits of the minimization setting.} In Section~\ref{sec:bigbigan} we showed that by going away from the adversarial min-max setting, to a minimization setting there are several benefits that inherently appear, including greater stability and  robustness to the choice of hyperparameters. Moreover, we now have access to two informative curves, similar to the standard minimization setting we are used to: DG (train) and DG (validation). In Fig.~\ref{fig:app_dg_train} and Fig.~\ref{fig:app_dg_val} we plot the two curves throughout the training by using the training and validation datasets for the computation of DG, respectively. We do not find any significant difference between the two curves, which is aligned with the observation that GANs suffer from underfitting, rather than overfitting~\citep{adlam2019investigating}. In the future, it would be interesting to further analyze the discrepancy between the curves for different models.

\begin{figure}[!htb]
    \centering
    \begin{minipage}{.45\textwidth}
        \centering
        \includegraphics[width=1\linewidth, height=0.2\textheight]{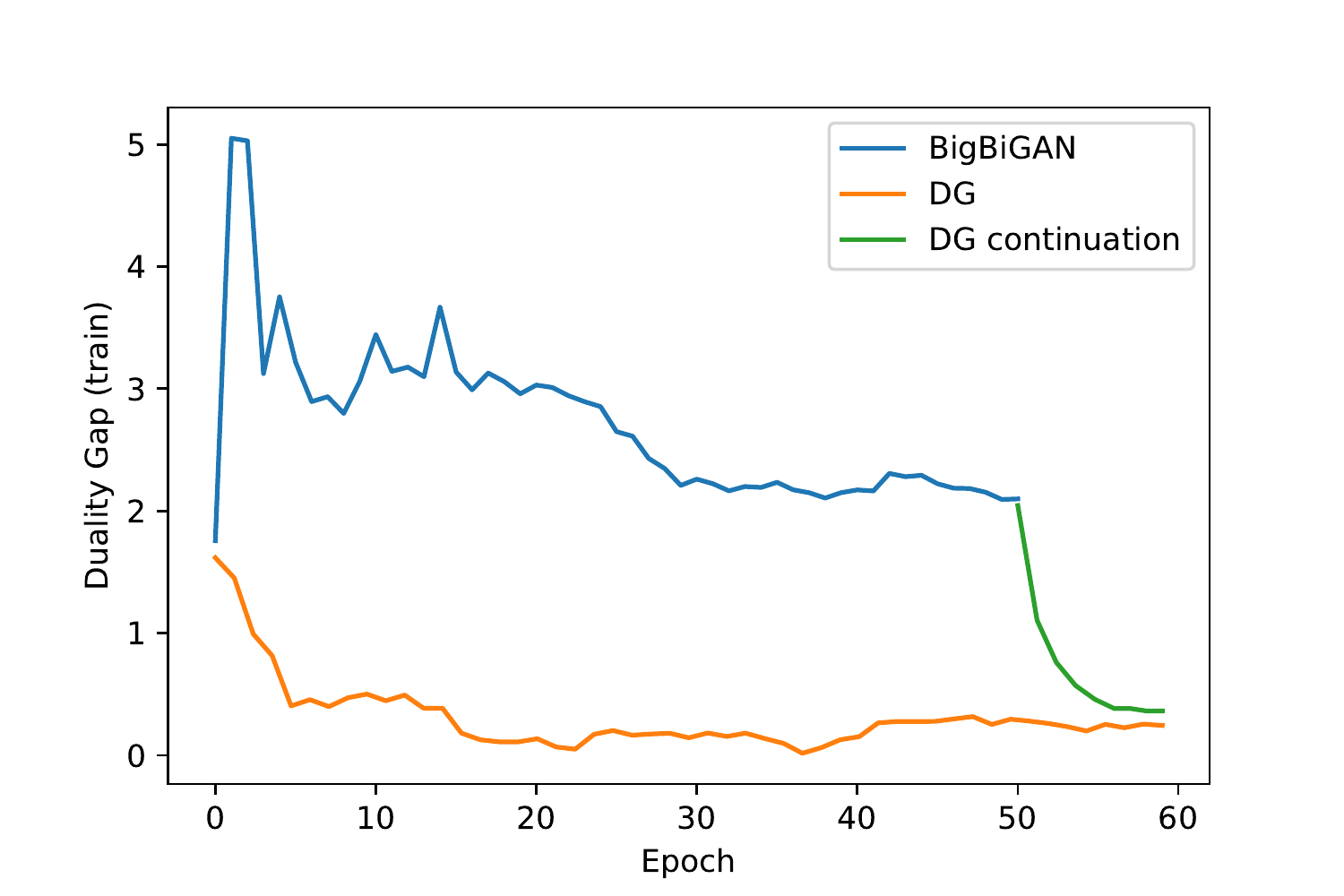}
        \caption{DG (train) over epochs.}
        \label{fig:app_dg_train}
    \end{minipage}%
    \begin{minipage}{0.45\textwidth}
        \centering
        \includegraphics[width=1\linewidth, height=0.2\textheight]{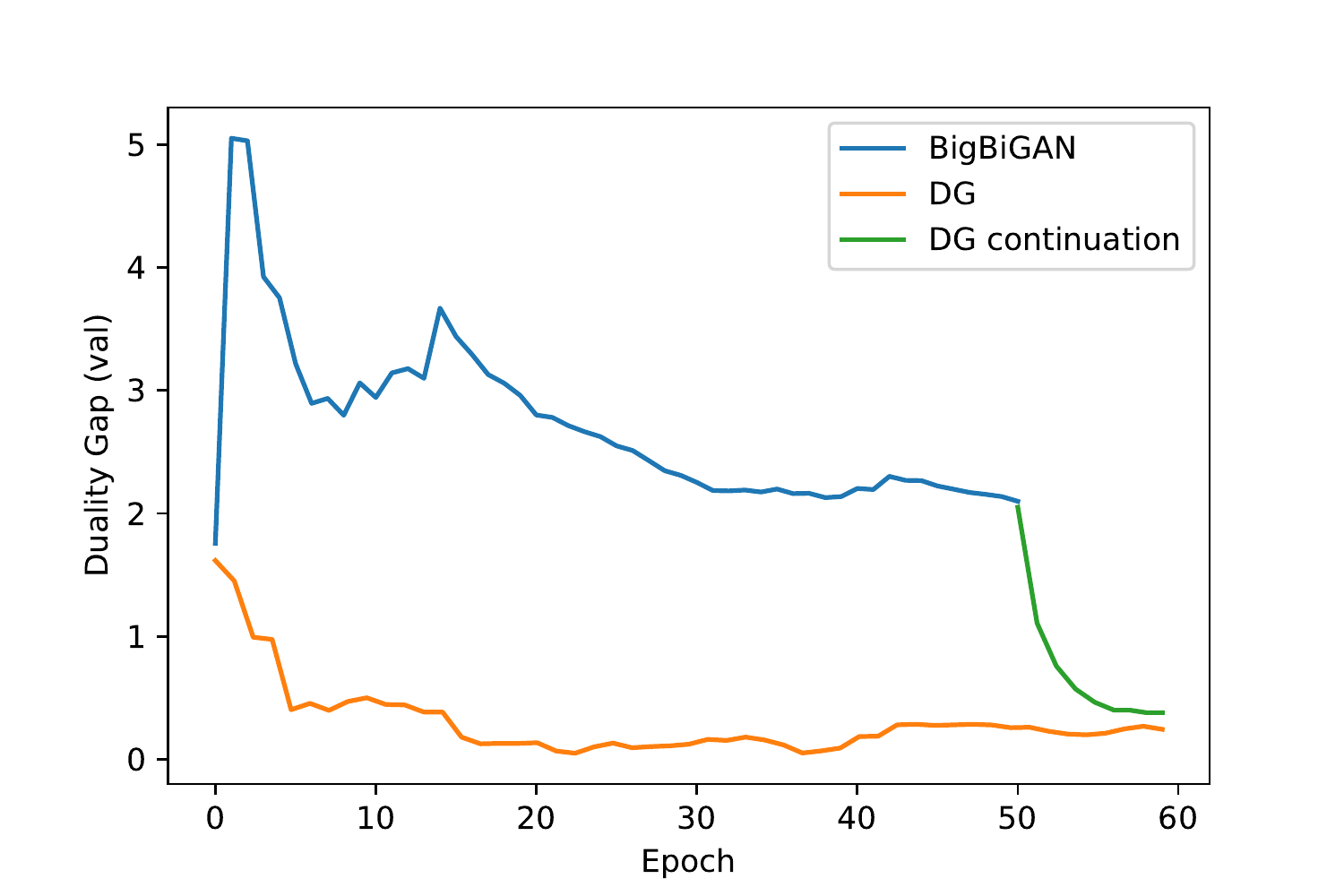}
        \caption{DG (validation) over epochs.}
        \label{fig:app_dg_val}
    \end{minipage}
\end{figure}

\end{document}